\documentclass{article} 
\usepackage{iclr2027_conference,times}

\usepackage{amsmath,amsfonts,bm}

\def\eqref#1{equation~\ref{#1}}

\def\1{\bm{1}}

\DeclareMathAlphabet{\mathsfit}{\encodingdefault}{\sfdefault}{m}{sl}
\SetMathAlphabet{\mathsfit}{bold}{\encodingdefault}{\sfdefault}{bx}{n}

\usepackage{hyperref}
\usepackage{url}
\usepackage{graphicx}
\usepackage{booktabs}
\usepackage{amsmath,amssymb,amsfonts}
\usepackage{amsthm}
\usepackage{float}

\theoremstyle{plain}
\newtheorem{proposition}{Proposition}
\newtheorem{corollary}{Corollary}[proposition] 

\theoremstyle{definition}

\theoremstyle{remark}
\newtheorem*{remark}{Remark}

\title{Understanding and Exploiting Anisotropy in Post-Training}

\author{%
  Samyak Jha\thanks{Equal contribution.} \\
  Indian Institute of Technology Dhanbad\\
  samyakjha71@gmail.com\\
  \And
  Harshvardhan Saini\footnotemark[1] \\
  Indian Institute of Technology Dhanbad\\
  hs1062005@gmail.com\\
  \And
  Yizhen Liao \\
  National University of Singapore\\
  liaoyz0711@gmail.com\\
  \And
  Yiming Tang\thanks{Corresponding authors.} \\
  National University of Singapore\\
  yiming@nus.edu.sg \\
  \And
  Dianbo Liu\footnotemark[2] \\
  National University of Singapore\\
  dianbo@nus.edu.sg \\
}

\newcommand{\method}{SphereGate}

\iclrfinalcopy
\begin{document}

\maketitle
\begin{abstract}
LLM post-training combines supervised fine-tuning (SFT), a mode-covering forward-KL objective, with reinforcement learning (RL), a mode-seeking reverse-KL objective. Frequency-weighted likelihood training leaves a well-known signature: \emph{anisotropy}, in which a few residual channels carry disproportionately large activations. Anisotropy is widely documented and usually treated as a defect, yet its function and its interaction with post-training remain unclear. We first analyze it. A label-free outlier rule isolates about 5\% of residual channels that are essential for language modeling: removing them raises perplexity from 10 to over $10^6$, versus 35 for count-matched random channels. Yet they barely distinguish correct from incorrect reasoning. SFT reshapes them, whereas RL leaves them largely intact and adapts the complementary channels. These channels therefore form the model's \emph{coherence substrate}, and reasoning adaptation happens elsewhere. We then exploit this. \textsc{SphereGate} learns one bounded gain per residual channel on a frozen backbone. Its activation-weighted gradients provably limit movement of high-energy coherence channels and leave the remaining channels free. With 0.1M trainable parameters, \textsc{SphereGate} outperforms parameter-efficient baselines by 2.0--7.3 points on MATH-500 across Qwen2.5 (0.5B--7B) and Llama-3-8B, is comparable or exceeds full-model GRPO. Anisotropy is not a defect but a division of labor that post-training can exploit.

\end{abstract}
\section{Introduction}
\label{sec:sg-v5-introduction}

Supervised fine-tuning (SFT) and reinforcement learning (RL) are central
to post-training large language models for reasoning
\citep{ouyang2022training,shao2024deepseekmath}. SFT learns from
demonstrations, while reward-driven RL increases the probability of
high-reward completions. Both operate on models with substantial
linguistic and reasoning structure, making the organization of their
existing representations a natural starting point for efficient
adaptation. A characteristic feature of these representations is
\emph{anisotropy}: token representations exhibit preferential
concentration along particular directions
\citep{gao2019representation,godey2024anisotropy}.
Figure~\ref{fig:anisotropy-overview}(a) illustrates this geometry.
Understanding its role during post-training offers a representation-level
perspective on how reasoning capabilities can be adapted.

\begin{figure}[t]
    \centering
    \includegraphics[width=0.8\linewidth]{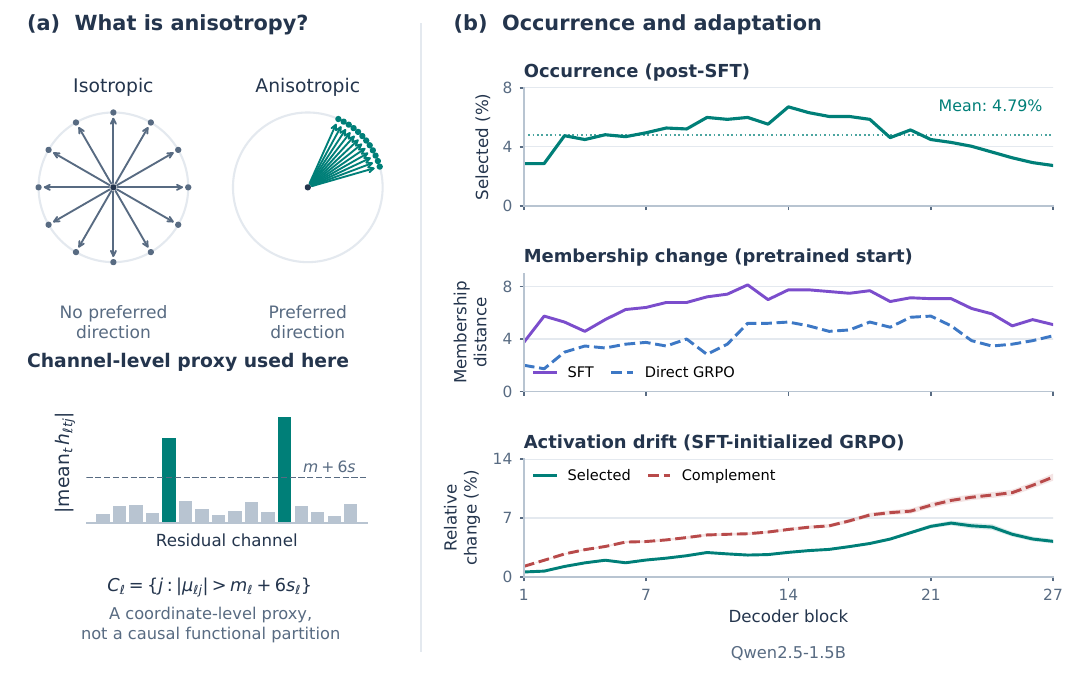}
    \caption{\textbf{Anisotropy and changes during post-training.}
    \textbf{(a)} Illustration of representations spread evenly across
    directions (isotropic) or concentrated around a preferred direction
    (anisotropic). The channel sketch highlights components whose
    average activations are unusually far from zero.
    \textbf{(b)} \emph{Top:} these selected channels form a small group
    across the displayed layers. \emph{Middle:} starting from the same
    pretrained model, supervised fine-tuning (SFT) changes which channels
    belong to this group more than direct reward-based training with
    GRPO in the compared runs. \emph{Bottom:} in a separate GRPO run
    starting from SFT, the remaining channels change more relative to
    their starting activation levels. Shading shows uncertainty estimated
    by repeatedly resampling evaluation problems.}
    \label{fig:anisotropy-overview}
\end{figure}

Prior studies document anisotropy and frequency-associated structure in
language-model representations
\citep{martinez2024mitigating,godey2024anisotropy,gao2019representation},
while circuit-level analyses identify attention mechanisms and neurons
that contribute to reasoning
\citep{arora2026language,kim2025reasoning}. Together, these findings
establish representation structure as a useful lens for studying model
behavior and motivate a closer examination of residual channels during
post-training. We focus on two questions: \emph{how do selected
anisotropic channels and their complement evolve under SFT and RL,
and how effectively can channel-wise reweighting adapt reasoning
within a frozen backbone?} These questions connect the geometry of
learned representations to the design of compact adaptation spaces.

In this study, we first characterize the occurrence and post-training
dynamics of anisotropic residual channels. A label-free outlier rule
on absolute mean post-block activations identifies a selected group
that we use as a coherence proxy. In our Qwen2.5-1.5B diagnostics, this
group comprises approximately 5\% of post-SFT coordinates across the
blocks displayed in Figure~\ref{fig:anisotropy-overview}(b). Comparing
independent training branches from the same pretrained checkpoint,
we observe larger selected-channel membership changes after SFT than
after direct GRPO. A separate SFT-initialized full-model GRPO run
exhibits larger baseline-relative activation changes in the
complementary channels. Our rollout analysis also reports higher
normalized subspace overlap between correct and incorrect rollouts
in the selected group than in its complement
(Figure~\ref{fig:analysis}). These observations
provide a channel-level account of heterogeneous adaptation dynamics
and motivate learning how to reweight the existing residual structure.

Building on this characterization, we introduce \textbf{SphereGate}
to manipulate the relative weighting of residual channels during
reasoning-oriented RL. Following the representation-level perspective
of activation-scaling methods such as IA\textsuperscript{3}
\citep{liu2022few}, SphereGate learns one positive gain per channel
on the full residual state after each decoder block, with all backbone
weights held fixed. The gains are identity-initialized and have bounded
log-values, yielding a compact diagonal adaptation operator. All
channel gains are trainable in the default configuration, while the
diagnostic partition supports analysis and restricted-gate ablations.
Across four backbones on MATH-500, SphereGate exceeds the strongest
TinyLoRA, LoRA-XS, or subnetwork baseline by
\textbf{2.0--7.3 percentage points}. Complement-only adaptation retains
nearly the all-channel gate's accuracy, further supporting the utility
of channel-wise residual reweighting for reasoning.

Our contributions are threefold:
\begin{itemize}
    \item \textbf{Characterizing anisotropic-channel dynamics.}
    We define a label-free mean-activation outlier partition and
    analyze its occurrence, rollout-subspace overlap, membership
    changes, and baseline-relative activation drift under SFT and GRPO.

    \item \textbf{Manipulating residual representations with a frozen backbone.}
    We introduce SphereGate, an identity-initialized positive
    full-residual gating method, and characterize its stability through
    an energy-weighted analysis under stated conditions.

    \item \textbf{Evaluating adaptation across scales and architectures.}
    Experiments on Qwen2.5-Instruct at 0.5B--7B and
    Llama-3-8B-Instruct demonstrate mathematical reasoning gains over
    the evaluated PEFT baselines, complemented by channel-restricted
    ablations.
\end{itemize}

\section{Related Work}

\paragraph{Parameter-efficient reinforcement learning.}
Recent work has shown that reinforcement learning for reasoning can be performed effectively by updating only a small subset of model parameters. TinyLoRA \citep{morris2026learning}, LoRA-XS \citep{balazy2024lora}, and the subnetwork view of RL \citep{mukherjee2025reinforcement} demonstrate that substantial reasoning gains can be obtained with highly parameter-efficient updates. These approaches, however, primarily identify the trainable parameter subspace using algebraic or parameter-space criteria, without explicitly considering the representational geometry of the model or the functional roles of individual representation channels \citep{dai2025san,liu2024dora}. Classical parameter-efficient fine-tuning methods such as LoRA \citep{hu2021lora} operate in weight space. Our experiments use Group Relative Policy Optimization (GRPO) \citep{shao2024deepseekmath}, a reinforcement-learning objective designed for reasoning with verifiable rewards. In contrast to weight-space approaches, SphereGate exploits the representation-space structure identified in our analysis and performs channel-wise gating of hidden representations, directing the adaptation budget toward reasoning-relevant channels while minimizing disruption to coherence-related channels.

\paragraph{Mechanistic interpretability and representation editing.}
A growing body of work has sought to identify the internal mechanisms responsible for reasoning in language models \citep{olsson2022context,du2026does,tang2026capsule,zhao2511rep2text}. Recent studies analyze attention-based circuits and individual neurons to identify components that contribute causally to reasoning behavior \citep{arora2026language,kim2025reasoning}, providing evidence that reasoning is mediated by structured and often sparse subsets of model components. A parallel line of work studies how model behavior can be controlled through interventions on internal representations \citep{zou2023representation}, including steering via activation differences \citep{turner2024steering,panickssery2023steering}, learned representation transformations such as ReFT and LoReFT \citep{wu2404reft}. These analyses, however, primarily operate at the level of attention heads, neurons, or architectural components rather than the residual channel structure. Our work instead investigates the residual representation at the channel level, identifying a distinction between reasoning-oriented channels and a separate set of highly anisotropic channels associated with coherent behavior, and directly exploits this structure during RL.

\section{Analysis}
\label{sec:analysis}

\begin{figure*}[t]
\centering
\begin{minipage}[t]{0.5\textwidth}
\centering
\includegraphics[width=0.7\linewidth]{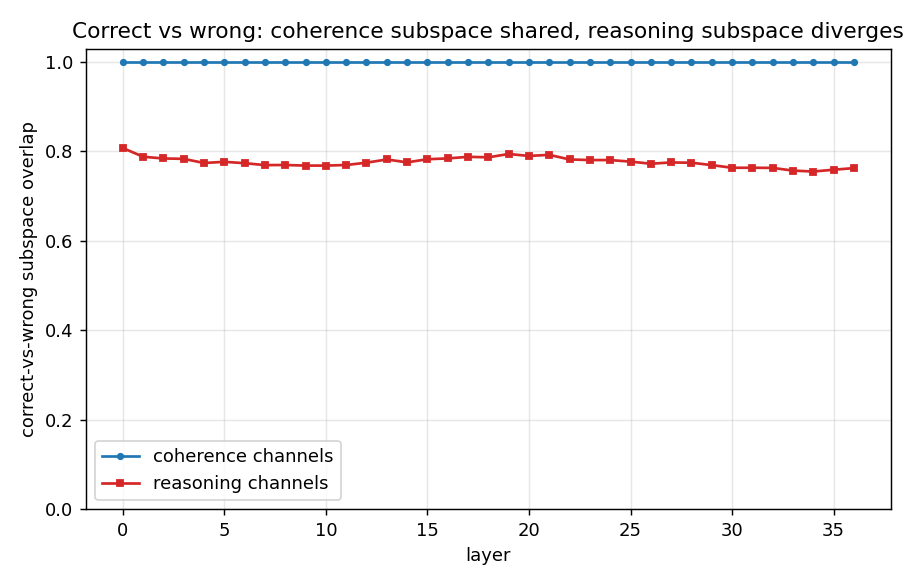}
\par\smallskip
\textbf{(a)}
\end{minipage}\hfill%
\begin{minipage}[t]{0.5\textwidth}
\centering
\includegraphics[width=0.7\linewidth]{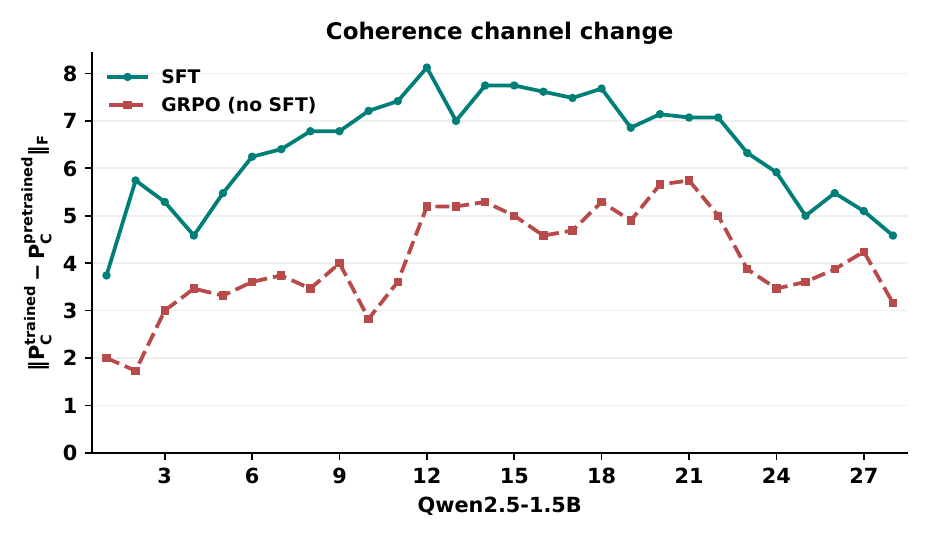}
\par\smallskip
\textbf{(b)}
\end{minipage}
\par\medskip
\begin{minipage}[t]{0.5\textwidth}
\centering
\includegraphics[width=0.7\linewidth]{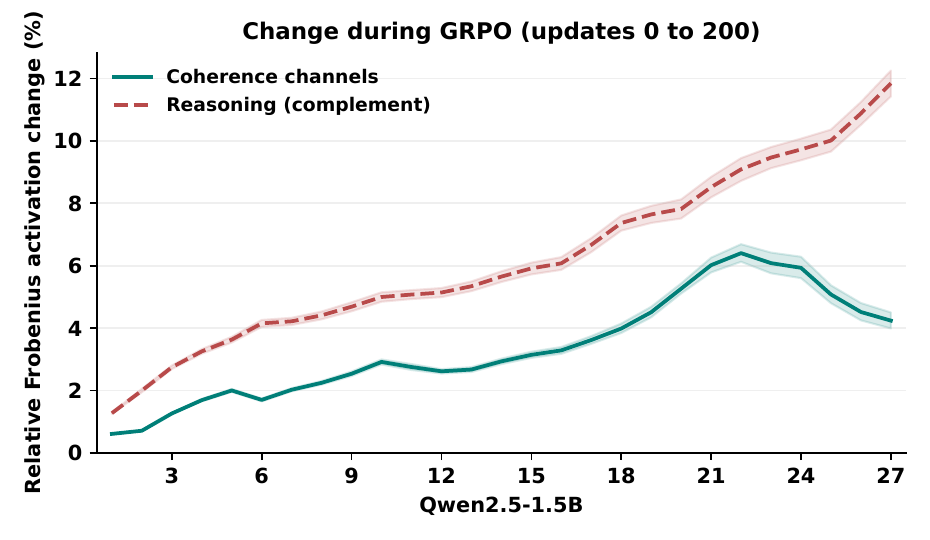}
\par\smallskip
\textbf{(c)}
\end{minipage}\hfill%
\begin{minipage}[t]{0.5\textwidth}
\centering
\includegraphics[width=0.7\linewidth]{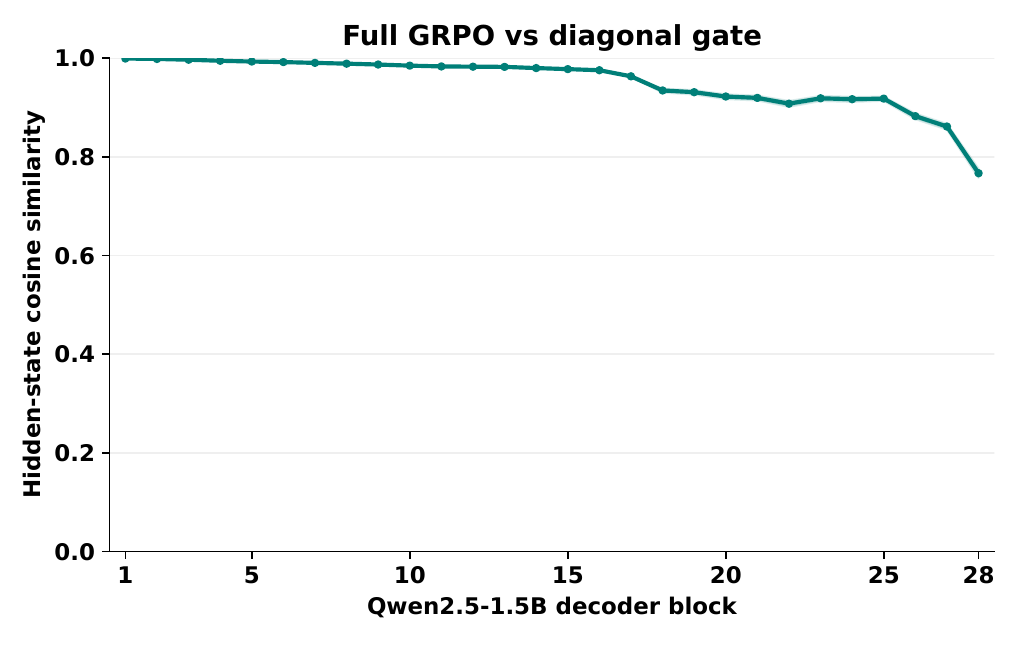}
\par\smallskip
\textbf{(d)}
\end{minipage}
\caption{\textbf{Coherence and reasoning representations.}
\textbf{(a)} Correct and incorrect rollouts have higher subspace overlap in coherence channels than in their complement.
\textbf{(b)} SFT produces larger coherence-channel membership changes than direct GRPO relative to the same pretrained model.
\textbf{(c)} During SFT-initialized GRPO, complementary reasoning channels undergo larger baseline-relative activation changes than coherence channels.
\textbf{(d)} Hidden-state cosine similarity between non-anisotropic diagonal-gate GRPO and full-model GRPO, both initialized from the same pretrained checkpoint, across decoder blocks~1--28.
Shading denotes pointwise 95\% problem-bootstrap confidence intervals.}
\label{fig:analysis}
\end{figure*}

We examine how correct and incorrect trajectories align and how SFT and GRPO modify their representations. We use anisotropic channels as a coherence proxy and refer to their complement as reasoning channels. For a matrix $A$, the Frobenius norm is
\begin{equation}
\|A\|_F=\left(\sum_{i,j}|A_{ij}|^2\right)^{1/2}.
\label{eq:frobenius-norm}
\end{equation}

\subsection{Diagonal-Gate versus Full-GRPO Similarity.}
In a separate matched-budget experiment, we compare non-anisotropic diagonal-gate GRPO with full-model GRPO, both trained independently from the same pretrained Qwen2.5-1.5B checkpoint without SFT initialization. Each run uses 1,000 rollout batches and 64,000 trajectories. Figure~\ref{fig:analysis}(d) reports the mean per-token cosine similarity of their post-block residual states on identical teacher-forced prefixes from 128 held-out gold solutions, pooled over 27,742 prediction positions at each decoder block. States are compared in their native coordinates without fitted alignment; the final block is measured before the final RMSNorm. Similarity is high in early blocks and decreases slightly motivating the existence of a diagonal only type of gate for reinforcement learning algorithms as post training for Large Language models.

\subsection{Correct and Wrong Subspace Alignment}
\label{sec:subspace_alignment}
We compare the representation subspaces of correct and incorrect rollouts, using normalized Frobenius overlap within each channel group. We generate correct and incorrect rollouts for each prompt and then independently aggregate them to construct the correct and incorrect subspaces with spectral decomposition, respectively. For group $g\in\{C,R\}$, let $U_g^{+}$ and $U_g^{-}$ be orthonormal bases for the correct and incorrect subspaces, with nonzero ranks $r_g^{+}$ and $r_g^{-}$. Their overlap is
\begin{equation}
\mathcal{O}_g=\frac{\left\|(U_g^{+})^{\top}U_g^{-}\right\|_F}{\sqrt{\min(r_g^{+},r_g^{-})}}.
\label{eq:correct-wrong-overlap}
\end{equation}
Figure~\ref{fig:analysis}(a) shows nearly complete alignment in the coherence channels, whereas the complementary reasoning channels exhibit lower overlap. The paired trajectories thus show stronger subspace alignment in the coherence group despite their different outcomes.

\subsection{Coherence-Channel Change under SFT and GRPO}
\label{sec:coherence_training_change}
We compare independent SFT and full-model GRPO branches initialized from the same pretrained model, with no SFT initialization for the GRPO branch. Figure~\ref{fig:analysis}(b) reports the Frobenius distance between each trained model's coherence-channel projector and the pretrained projector, estimated on shared calibration prompts. Writing $C^{(t)}$ for the coherence-channel set and $P_C^{(t)}=\operatorname{diag}(\mathbf{1}_{C^{(t)}})$ for its projector, we measure
\begin{equation}
\begin{aligned} D_C^{(t)}&=\left\|P_C^{(t)}-P_C^{(\mathrm{pre})}\right\|_F\\ &=\sqrt{\left|C^{(t)}\mathbin{\triangle}C^{(\mathrm{pre})}\right|}. \end{aligned}
\label{eq:coherence-membership-drift}
\end{equation}
Here $t\in\{\mathrm{SFT},\mathrm{GRPO}\}$, $\mathrm{pre}$ denotes pretraining, and $\triangle$ is symmetric set difference. SFT shows larger coherence-channel membership changes than direct GRPO for these checkpoints. Both branches modify the identified channels, but the direct-GRPO changes are smaller.

\subsection{Coherence versus Reasoning Changes during GRPO}
\label{sec:grpo_channel_change}
We next examine a separate full-model GRPO run initialized from SFT, keeping the initial channel partition fixed. Figure~\ref{fig:analysis}(c) shows the Frobenius norm of activation change, normalized by each group's initial activation norm, using identical completion tokens before and after GRPO. Let $H^{(0)}$ and $H^{(1)}$ be the initial and final activations, and $P_g^{(0)}$ the fixed initial group projector, with $P_R^{(0)}=I-P_C^{(0)}$. Raw and percentage-relative changes are
\begin{equation}
\begin{aligned} F_g&=\left\|(H^{(1)}-H^{(0)})P_g^{(0)}\right\|_F,\\ A_g&=100\,\frac{F_g}{\left\|H^{(0)}P_g^{(0)}\right\|_F}, \qquad g\in\{C,R\}. \end{aligned}
\label{eq:relative-channel-drift}
\end{equation}
The plot reports $A_g$. The complementary reasoning channels show larger relative changes, while coherence-channel changes remain smaller but nonzero. The shaded bands represent pointwise 95\% problem-bootstrap confidence intervals.

\section{Methodology}
\label{sec:methodology}

\begin{figure}[H]
    \centering
    \includegraphics[width=1\linewidth]{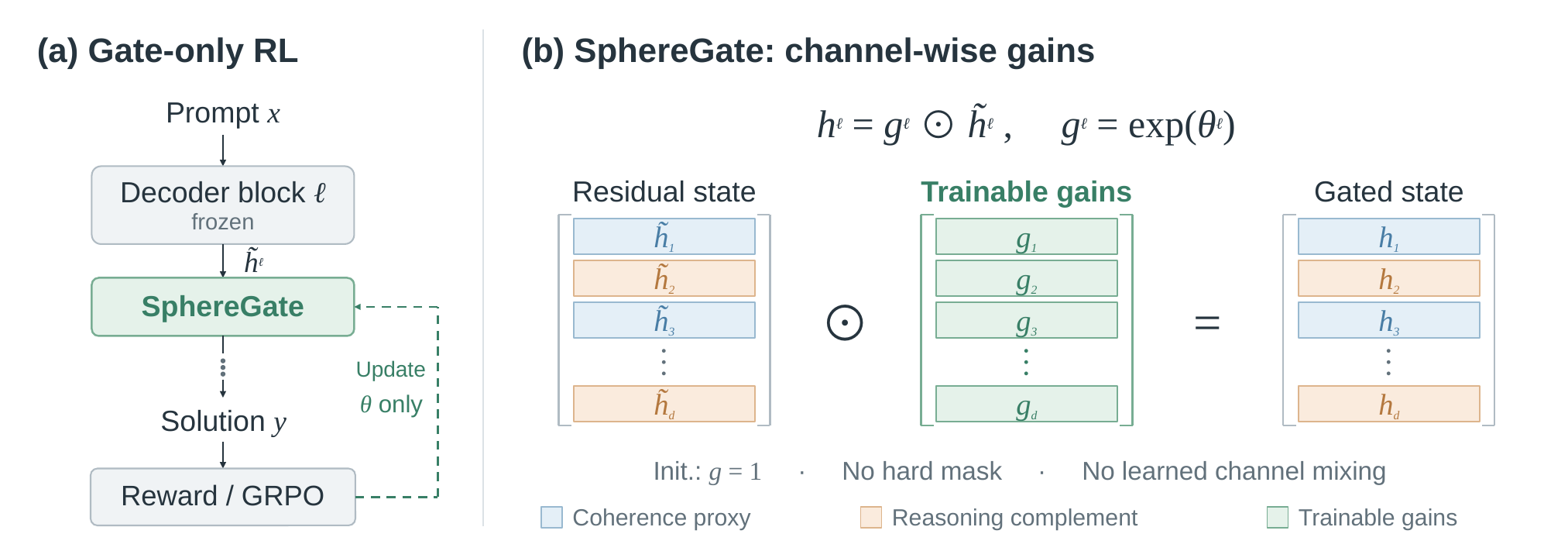}
    \caption{The General Framework for Our Proposed Method, SphereGate.}
    \label{fig:method}
\end{figure}

\subsection{SphereGate}
\label{sec:spheregate}

Consider a decoder block $l$ with residual input $h^{l-1}\in\mathbb{R}^d$. A standard Transformer block writes into the residual stream through an attention sublayer followed by an MLP sublayer:
\begin{align}
a^l &= \mathrm{Attn}^l(\mathrm{LN}(h^{l-1})), \label{eq:attn} \\
u^l &= h^{l-1} + a^l, \label{eq:res-attn} \\
m^l &= \mathrm{MLP}^l(\mathrm{LN}(u^l)), \label{eq:mlp} \\
\tilde{h}^l &= u^l + m^l, \label{eq:res-mlp}
\end{align}
where $\tilde{h}^l$ is the full residual state after the MLP write, and $\mathrm{LN}$ denotes layer normalization (modern models typically use RMSNorm). SphereGate then applies a learned per-channel gain directly to this residual state:
\begin{equation}
h^l = g^l \odot \tilde{h}^l, \qquad g^l = \exp(\theta^l). \label{eq:spheregate}
\end{equation}
During training, only $\theta^l$ is trainable; all backbone parameters remain frozen.

\subsection{Gradient Analysis}
\label{sec:gradient-analysis}

Let $L$ denote the number of decoder blocks and $d_\ell$
the residual dimension at block $\ell$. For token position
$t$, SphereGate maps the pre-gate state
$\widetilde{h}_{\ell t}\in\mathbb{R}^{d_\ell}$ to
\[
    h_{\ell t}
    =
    \exp(\theta_\ell)\odot\widetilde{h}_{\ell t},
\]
where $\theta_\ell$ contains the channel-wise log-gains.
Within their bounds $|\theta_{\ell k}|<c$, with $c>0$,
the gradient of a differentiable loss $\mathcal{L}$ is
\begin{equation}
    \nabla_{\theta_\ell}\mathcal{L}
    =
    \sum_{t\in\mathcal{T}}
    \frac{\partial\mathcal{L}}{\partial h_{\ell t}}
    \odot h_{\ell t},
    \label{eq:sg-log-gain-gradient}
\end{equation}
where $\mathcal{T}$ contains the nonpadding token
positions in the fixed training batch. Thus, gain
gradients aggregate activation-weighted loss sensitivities.

\paragraph{Channel partition.}
As mentioned in the figure \ref{fig:anisotropy-overview}, we identify anisotropic channels using a fixed calibration
pass. Let $h^{\mathrm{cal}}_{\ell tk}$ denote the post-block
activation of channel $k$ at calibration position $t$,
measured before final model normalization at the last block.
For the nonpadding positions
$\mathcal{T}_{\mathrm{cal}}$, with
$N_{\mathrm{cal}}=|\mathcal{T}_{\mathrm{cal}}|$, define
\[
    \mu_{\ell k}
    =
    \frac{1}{N_{\mathrm{cal}}}
    \sum_{t\in\mathcal{T}_{\mathrm{cal}}}
    h^{\mathrm{cal}}_{\ell tk},
    \qquad
    v_{\ell k}=|\mu_{\ell k}|.
\]
Using the median and unscaled median absolute deviation,
\[
    m_\ell=\operatorname{median}_k v_{\ell k},
    \qquad
    s_\ell=\operatorname{median}_k|v_{\ell k}-m_\ell|,
\]
we select
\[
    C_\ell=\{k:v_{\ell k}>m_\ell+6s_\ell\},
    \qquad
    R_\ell=\{1,\ldots,d_\ell\}\setminus C_\ell.
\]
Index all layer-channel pairs by $j\in\{1,\ldots,p\}$,
where $p=\sum_{\ell=1}^L d_\ell$. The sets $C$ and $R$
collect the selected and complementary coordinates
across layers and remain fixed during the analysis.
Default SphereGate training optimizes gains on all
coordinates.

\paragraph{Energy-weighted stability.}
Let $\pi_\theta$ denote the gate-adapted policy.
For a fixed rollout batch, consider the maximization objective
\[
    J(\theta)=U(\theta)-\beta K(\theta),
    \qquad \beta>0,
\]
where $U$ is the length-normalized GRPO reward surrogate
and $K$ is the KL penalty to a frozen reference policy
$\pi_{\mathrm{ref}}$. Samples, advantages, and sequence
lengths are fixed during this optimization stage.

Let $\theta_0$ be the rollout-policy log-gains,
$\theta_1$ a subsequent iterate, and
$\Delta=\theta_1-\theta_0$. Assume that the connecting
segment lies inside the gain bounds and that $J$ is
twice continuously differentiable on a neighborhood
of the segment. Define the baseline channel energies
and their diagonal matrix by
\[
    E_j=\sum_{t\in\mathcal{T}}h_{tj}(\theta_0)^2>0,
    \qquad
    D=\operatorname{diag}(E_1,\ldots,E_p),
\]
where $h_{tj}$ denotes the activation at the
layer-channel pair indexed by $j$.

Write $z=D^{1/2}\Delta$ for the energy-weighted
log-gain displacement. Define the selected group's
initial gradient magnitude and final gradient residual as
\begin{equation}
    \begin{aligned}
        \varepsilon_C
        &=
        \|D_C^{-1/2}\nabla_C J(\theta_0)\|_2,\\
        \tau_C
        &=
        \|D_C^{-1/2}\nabla_C J(\theta_1)\|_2.
    \end{aligned}
    \label{eq:sg-normalized-signals}
\end{equation}
Subscripts denote coordinate restrictions or matrix
blocks; in particular, $D_C$ is the principal block
indexed by $C$. The norm $\|\cdot\|_2$ is Euclidean
for vectors and spectral for matrices.

Define the integrated normalized curvature
\begin{equation}
    \mathcal{H}
    =
    D^{-1/2}
    \left[
        -\int_0^1
        \nabla^2J(\theta_0+s\Delta)\,ds
    \right]
    D^{-1/2}.
    \label{eq:sg-integrated-curvature}
\end{equation}

\begin{proposition}[Energy-weighted gate stability]
\label{prop:sg-conditional-stability}
Suppose that, for $\lambda>0$ and $\rho\geq0$,
\[
    \mathcal{H}_{CC}\succeq\lambda I_{|C|},
    \qquad
    \|\mathcal{H}_{CR}\|_2\leq\lambda\rho,
\]
where $\succeq$ denotes positive-semidefinite ordering
and $I_{|C|}$ is the identity on the selected coordinates.
Then
\begin{align}
    \|z_C\|_2
    &\leq
    B_C:=
    \frac{\varepsilon_C+\tau_C}{\lambda}
    +\rho\|z_R\|_2,
    \label{eq:sg-group-stability}\\
    |\Delta_j|
    &\leq
    \frac{B_C}{\sqrt{E_j}},
    \qquad j\in C.
    \label{eq:sg-coordinate-stability}
\end{align}
\end{proposition}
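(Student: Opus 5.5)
We apply the fundamental theorem of calculus to $\nabla J$ along the segment from $\theta_0$ to $\theta_1$. The segment lies inside the gain bounds, and $J$ is $C^2$ on a neighborhood of it, so
\[
\nabla J(\theta_1)-\nabla J(\theta_0)=\int_0^1\nabla^2 J(\theta_0+s\Delta)\,ds\;\Delta .
\]
Multiply on the left by $D^{-1/2}$ and insert $D^{-1/2}D^{1/2}$ in front of $\Delta$. By the definition of $\mathcal{H}$ in \eqref{eq:sg-integrated-curvature}, this becomes
\[
D^{-1/2}\bigl(\nabla J(\theta_0)-\nabla J(\theta_1)\bigr)=\mathcal{H}\,z .
\]
Since $D$ is diagonal, $D^{-1/2}$ commutes with restriction to coordinate blocks. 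Restricting to the rows indexed by $C$ therefore gives
\[
\mathcal{H}_{CC}z_C+\mathcal{H}_{CR}z_R=D_C^{-1/2}\nabla_C J(\theta_0)-D_C^{-1/2}\nabla_C J(\theta_1).
\]
By the triangle inequality and \eqref{eq:sg-normalized-signals}, the right-hand side has Euclidean norm at most $\varepsilon_C+\tau_C$.

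Next we take the inner product of the block identity with $z_C$. The hypothesis $\mathcal{H}_{CC}\succeq\lambda I_{|C|}$ gives $z_C^\top\mathcal{H}_{CC}z_C\ge\lambda\|z_C\|_2^2$. This uses only the quadratic form, so any asymmetry of $\mathcal{H}_{CC}$ is harmless; in any case $\mathcal{H}$ is symmetric as an integral of Hessians. Cauchy--Schwarz bounds the remaining terms:
\[
\lambda\|z_C\|_2^2\le z_C^\top\bigl(\text{RHS}-\mathcal{H}_{CR}z_R\bigr)\le\|z_C\|_2\bigl(\varepsilon_C+\tau_C+\|\mathcal{H}_{CR}\|_2\|z_R\|_2\bigr).
\]
If $z_C=0$, \eqref{eq:sg-group-stability} holds trivially. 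Otherwise we divide by $\lambda\|z_C\|_2$ and use $\|\mathcal{H}_{CR}\|_2\le\lambda\rho$ to obtain $\|z_C\|_2\le(\varepsilon_C+\tau_C)/\lambda+\rho\|z_R\|_2=B_C$.

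For the coordinate bound, recall that $z_j=\sqrt{E_j}\,\Delta_j$ with $E_j>0$. Hence $|\Delta_j|\sqrt{E_j}=|z_j|\le\|z_C\|_2\le B_C$ for every $j\in C$, and dividing by $\sqrt{E_j}$ gives \eqref{eq:sg-coordinate-stability}.

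The only delicate point is the normalization bookkeeping. One must check that inserting $D^{\pm1/2}$ reproduces exactly the sign convention of $\mathcal{H}$ (defined with $-\nabla^2 J$, matching a maximization objective), and that the block restriction of $D^{-1/2}v$ equals $D_C^{-1/2}v_C$. The sign is harmless because the triangle inequality bounds the difference of the two gradient terms by the sum $\varepsilon_C+\tau_C$. Everything else is direct linear algebra.
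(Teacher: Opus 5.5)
Your proof is correct and follows the paper's route: you use the same fundamental-theorem-of-calculus identity $\mathcal{H}z = D^{-1/2}\bigl(\nabla J(\theta_0)-\nabla J(\theta_1)\bigr)$, restrict it to the $C$ rows, and finish with the same coordinate step $|z_j|\le\|z_C\|_2$. The only difference is cosmetic: the paper inverts $\mathcal{H}_{CC}$ and uses $\|\mathcal{H}_{CC}^{-1}\|_2\le\lambda^{-1}$, whereas you test the block identity against $z_C$ and apply Cauchy--Schwarz, which is the argument behind that inverse bound and spares you from justifying invertibility separately.
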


The bound separates selected-group optimization pressure
from cross-group coupling. For a fixed $B_C$, it provides
an inverse-square-root energy envelope on coordinate-wise
log-gain movement. Moreover, if $\|z_R\|_2>0$ and
\[
    \frac{\varepsilon_C+\tau_C}{\lambda}
    \leq\alpha\|z_R\|_2,
    \qquad
    \alpha\geq0,\quad \alpha+\rho<1,
\]
then
\[
    \|z_C\|_2
    \leq(\alpha+\rho)\|z_R\|_2
    <\|z_R\|_2.
\]
This gives a precise criterion for adaptation to
concentrate in the complementary group under the
energy-weighted metric. Appendix~\ref{app:sg-gate-stability}
provides the proof, the corresponding direct
representation-displacement bound, and explicit
connections to the GRPO reward signal and KL curvature.

\section{Results}
\label{sec:results}

\subsection{Experimental Setup}
\label{sec:setup}

\textbf{Models and data.} We evaluate \method{} on Qwen2.5-Instruct
(0.5B, 1.5B, 3B, and 7B), Llama-3-8B-Instruct
\cite{grattafiori2025llama}, and Qwen2.5-Math-7B \cite{yang2024qwen2}.
Training data comprise MATH-train \citep{hendrycks2021measuring},
DAPO-Math-17k \cite{yu2026dapo}, and MIX10k
(an equal mixture of 5k MATH and 5k DAPO).

\textbf{GRPO.} We use a PPO-clip surrogate with clip-eps $0.2$,
$\mu=2$ inner epochs, and group-normalized advantages
(std\_adv=True, len\_norm=True). The KL coefficient is $0.01$;
the gate-disabled model serves as the identity reference, avoiding
a second model copy. Training uses 90 steps, 64 prompts and
8 completions per prompt (512 rollouts per step), temperature 1.0,
top-p 1.0, and a 1024-token cap (4096 for the Math model).

\textbf{Baselines.} We compare with TinyLoRA \citep{morris2026learning},
LoRA-XS \citep{balazy2024lora}, random subnetwork adaptation
\citep{mukherjee2025reinforcement}, IA$^3$, LoRA \citep{hu2021lora},
and full fine-tuning. \method{}, LoRA-XS, TinyLoRA, and Subnetwork
each train $\approx$0.100M parameters, with learning rates of
1e-3, 1e-3, 2e-4, and 1e-4, respectively. \method{} trains all channels
with $\theta$ initialized to 0 and clamped at $c=0.7$;
LoRA-XS and TinyLoRA use SVD-rank 64, and Subnetwork uses
subnet-k 10000. IA$^3$ uses a learning rate of $1e-5$ and learned
scaling vectors on key, value, and feed-forward projections;
full fine-tuning uses 1e-6. All methods share prompts, rewards,
held-out splits, and rollout budgets.

\subsection{MATH-500 Performance}
\label{sec:math500}

\method{} achieves the highest reported greedy MATH-500 accuracy
on every backbone (Table~\ref{tab:math500}). On Qwen2.5-0.5B, its
$42.3$ exceeds the strongest evaluated PEFT baseline, LoRA-XS
($35.0$), by $7.3$ points; the corresponding 7B margin is $2.5$ points.
Accuracy improves over the base models from $64.5$ to $70.0$ on
Qwen2.5-3B, $70.0$ to $79.5$ on Qwen2.5-7B, and $33.5$ to $38.0$
on Llama-3-8B-Instruct.

\newcommand{\result}[2]{#1\,{\scriptsize$\pm$#2}}
\begin{table*}[t]
\centering
\small
\caption{Performance on MATH-500 under GRPO using greedy decoding.
All values are absolute accuracy scores on a $0$--$100$ scale. Standard deviations are estimated across three training seed checkpoints.}
\label{tab:math500}
\begin{tabular}{lcccc}
\toprule
\textbf{Method}
& \textbf{Qwen2.5-0.5B-It}
& \textbf{Qwen2.5-3B-It}
& \textbf{Qwen2.5-7B-It}
& \textbf{Llama-3-8B-It} \\
\midrule

Base Model
& \result{32.5}{0.82}
& \result{64.5}{0.71}
& \result{70.0}{0.80}
& \result{33.5}{0.76} \\

Full GRPO
& \result{41.6}{0.72}
& \result{69.2}{0.60}
& \result{77.2}{0.61}
& \result{37.4}{0.65} \\

IA$^3$
& \result{31.7}{1.02}
& \result{63.0}{0.88}
& \result{68.6}{0.94}
& \result{32.4}{0.93} \\

TinyLoRA
& \result{33.5}{0.74}
& \result{68.0}{0.63}
& \result{75.0}{0.58}
& \result{35.0}{0.72} \\

LoRA-XS
& \result{35.0}{0.68}
& \result{67.0}{0.57}
& \result{77.0}{0.52}
& \result{36.0}{0.66} \\

Subnetwork
& \result{34.5}{0.91}
& \result{67.0}{0.76}
& \result{73.0}{0.70}
& \result{34.0}{0.84} \\

\textbf{SphereGate}
& \textbf{\result{42.3}{0.61}}
& \textbf{\result{70.0}{0.54}}
& \textbf{\result{79.5}{0.46}}
& \textbf{\result{38.0}{0.58}} \\

\bottomrule
\end{tabular}
\end{table*}

\subsection{Mathematical Reasoning Performance}
\label{sec:math_benchmarks}

Table~\ref{tab:amc_minerva} reports mean accuracy over 16 sampled
responses (Avg@16) on AMC23 \cite{cao2025step} and Minerva Math
\cite{lewkowycz2022solving} for Qwen2.5-7B-Instruct and
Llama-3-8B-Instruct. \method{} leads the evaluated PEFT baselines
in all four settings. On Qwen2.5-7B AMC23, it scores $55.0\%$,
versus $53.7\%$ for Subnetwork and $51.9\%$ for the base model
(a $3.1$-point gain). On Llama-3-8B Minerva Math, it gains
$1.49$ points over the base model, compared with flat or lower
scores for TinyLoRA and LoRA-XS.

\begin{table*}[t]
\centering
\small
\caption{Mathematical reasoning performance on AMC23 and Minerva Math
using Avg@16. Standard deviations are estimated across three training seed checkpoints.}
\label{tab:amc_minerva}

\begin{tabular}{lcccc}
\toprule
\textbf{Method}
& \multicolumn{2}{c}{\textbf{AMC23}}
& \multicolumn{2}{c}{\textbf{Minerva Math}} \\
\cmidrule(lr){2-3}\cmidrule(lr){4-5}
& \textbf{Qwen2.5-7B-It}
& \textbf{Llama-3-8B-It}
& \textbf{Qwen2.5-7B-It}
& \textbf{Llama-3-8B-It} \\
\midrule

Base Model
& \result{51.9}{1.38}
& \result{8.8}{0.47}
& \result{33.66}{0.72}
& \result{12.39}{0.61} \\

Full GRPO
& \result{53.6}{1.00}
& \result{10.2}{0.34}
& \result{36.1}{0.70}
& \result{14.1}{0.70} \\

IA$^3$
& \result{50.8}{1.47}
& \result{8.4}{0.52}
& \result{32.90}{0.81}
& \result{11.80}{0.78} \\

LoRA-XS
& \result{52.8}{1.11}
& \result{9.2}{0.39}
& \result{34.85}{0.61}
& \result{12.34}{0.74} \\

TinyLoRA
& \result{52.2}{1.27}
& \result{9.4}{0.22}
& \result{34.73}{0.58}
& \result{11.08}{0.86} \\

Subnetwork
& \result{53.7}{1.13}
& \result{9.6}{0.28}
& \result{34.87}{0.86}
& \result{12.02}{0.88} \\

\textbf{SphereGate}
& \textbf{\result{55.0}{0.82}}
& \textbf{\result{10.0}{0.31}}
& \textbf{\result{36.28}{0.66}}
& \textbf{\result{13.88}{0.68}} \\

\bottomrule
\end{tabular}
\end{table*}

\subsection{Competition and Olympiad Benchmarks}
\label{sec:competition_benchmarks}

On Qwen2.5-7B-Math-Instruct, \method{} reaches $23.33\%$ on AIME24
and $44.0\%$ on OlympiadBench using maj@16 (Table~\ref{tab:aime_olympiad}),
supporting out-of-distribution transfer. On AIME24, it improves the
base score ($16.67\%$) by $6.66$ points and exceeds LoRA-XS
($20.00\%$), while TinyLoRA and Subnetwork match the base score.
\begin{table}[htbp]
\centering
\small
\caption{Transfer performance on AIME24 and OlympiadBench for
Qwen2.5-7B-Math-Instruct. AIME24 and OlympiadBench is evaluated using maj@16, All values are percentages.}
\label{tab:aime_olympiad}

\begin{tabular}{lcc}
\toprule
\textbf{Method}
& \textbf{AIME24}
& \textbf{OlympiadBench} \\
\midrule

Base Model
& 16.67
& 41.5 \\

Full GRPO
& 26.67
& 44.5 \\

IA$^3$
& 16.67
& 40.2 \\

LoRA-XS
& 20.00
& 43.0 \\

TinyLoRA
& 16.67
& 43.0 \\

Subnetwork
& 16.67
& 42.0 \\

\textbf{SphereGate}
& \textbf{23.33}
& \textbf{44.0} \\

\bottomrule
\end{tabular}
\end{table}

\subsection{Channel Ablations: Reasoning Adaptation and General-Text Prediction}
\label{sec:ablation}
\label{sec:perplexity-channel-ablation}

We distinguish restricted-gain training from inference-time activation
removal (Table~\ref{tab:ablation}). On Qwen2.5-3B-Instruct, excluded
gains remain fixed at one, preserving their activations. Complement-only
GRPO reaches 69.4 on MATH-500, within 0.6 percentage points of
all-channel \method{} (70.0).

For general-text prediction, we remove count-matched channel groups
from pretrained Qwen2.5-1.5B and its MATH-SFT checkpoint. Masks
estimated from 256 MATH training prompts select 4.19\% and 4.36\%
of coordinates on average, respectively. Each block's selected group
is compared with an equal number of uniformly sampled complementary
channels, averaging control perplexities across five random-mask seeds.
All conditions score the same 299,077 WikiText-2 test targets with
context length 1,024 and stride 512, without retraining. Per-token
residual-norm restoration to the value immediately before masking
additionally controls scale reduction. Anisotropic-channel removal
causes substantially greater prediction damage even with this control:
SFT perplexity rises from 10.44 to $2.01\times10^6$, versus 35.06
for matched complementary removal.

These findings are consistent with selected channels being important
for general-text prediction and complementary gains enabling effective
reasoning adaptation, supporting anisotropy as an operational coherence
proxy. They do not establish exclusive functional localization:
model sizes differ, restricted gates have unequal parameter counts,
and count matching with norm restoration equalizes neither removed
activation energy nor directional displacement. Perplexity measures
next-token prediction rather than semantic coherence directly.

\begin{table}[t]
\centering
\small

\begin{tabular}{lc}
\toprule
\multicolumn{2}{c}{
\textbf{(a) Restricted-gain training: Qwen2.5-3B-Instruct}
} \\
\midrule
Gate variant & MATH-500 greedy (\%) \\
\midrule
Baseline (un trained) & 64.5 \\
\method{} (all channels) & 70.0 \\
Anisotropic-only gate & 65.1 \\
Random Channel gate & 66.9 \\
Matched Complement-only gate & 67.5 \\
Complement-only gate & 69.4 \\
\bottomrule
\end{tabular}

\par\medskip

\begin{tabular}{llrr}
\toprule
\multicolumn{4}{c}{
\textbf{(b) Activation removal: Qwen2.5-1.5B}
} \\
\midrule
Removed channels & Scaling & Pretrained PPL & SFT PPL \\
\midrule
None & -- & 8.84 & 10.44 \\
Anisotropic & None
    & $3.91\times10^6$ & $2.44\times10^6$ \\
Matched complement & None
    & 31.18 & 37.96 \\
Anisotropic & Norm-restored
    & $2.73\times10^6$ & $2.01\times10^6$ \\
Matched complement & Norm-restored
    & 29.21 & 35.06 \\
\bottomrule
\end{tabular}

\caption{\textbf{Complementary channel ablations.}
(a) MATH-500 accuracy after GRPO with different trainable
channel groups; higher is better. Excluded gains remain at
one, so their activations are not removed.
(b) WikiText-2 perplexity under inference-time activation
removal; lower is better. Anisotropic and matched-complement
conditions remove the same number of coordinates in every
decoder block. Matched-complement perplexities are averaged
across five random-mask seeds. Full-test perplexities use
BF16 inference and FP64 loss accumulation. Norm restoration
is an ablation control, not part of the SphereGate training
operator.}
\label{tab:ablation}
\label{tab:perplexity-channel-ablation}
\end{table}
\section{Conclusion}
\label{sec:conclusion}

\method{} demonstrates that learning diagonal channel-wise gains is sufficient for effective GRPO adaptation of a frozen backbone, achieving strong mathematical reasoning performance across Qwen2.5 and Llama-3 models. Our analyses reveal a channel-level separation between coherence and reasoning: anisotropic residual channels predominantly support coherence, while complementary channels provide the primary capacity for reasoning adaptation.

\newpage
\subsection*{AI Use Statement}

In this work, we used generative AI tools, including ChatGPT, Claude, and Claude Code to implement methods, write and edit code, and assist in drafting the manuscript. We did not use generative AI tools to generate synthetic data, formulate , or propose our core analysis, methodology, or experimental design; these were carried out by the authors. We have reviewed all AI-assisted work: code was checked against expected behavior and the underlying data, and all AI-drafted text and result interpretations were verified against experimental outputs and revised where necessary. We take responsibility for the final content of this work, including text, claims, and artifacts produced with the aid of generative AI.

\bibliography{iclr2027_conference}

@article{morris2026learning,
  title={Learning to reason in 13 parameters},
  author={Morris, John X and Mireshghallah, Niloofar and Ibrahim, Mark and Mahloujifar, Saeed},
  journal={arXiv preprint arXiv:2602.04118},
  year={2026}
}

@article{balazy2024lora,
  title={Lora-xs: Low-rank adaptation with extremely small number of parameters},
  author={Ba{\l}azy, Klaudia and Banaei, Mohammadreza and Aberer, Karl and Tabor, Jacek},
  journal={arXiv preprint arXiv:2405.17604},
  year={2024}
}

@article{mukherjee2025reinforcement,
  title={Reinforcement learning finetunes small subnetworks in large language models, 2025},
  author={Mukherjee, Sagnik and Yuan, Lifan and Hakkani-Tur, Dilek and Peng, Hao},
  journal={URL https://arxiv. org/abs/2505.11711},
  year={2025}
}

@article{shao2024deepseekmath,
  title={Deepseekmath: Pushing the limits of mathematical reasoning in open language models},
  author={Shao, Zhihong and Wang, Peiyi and Zhu, Qihao and Xu, Runxin and Song, Junxiao and Bi, Xiao and Zhang, Haowei and Zhang, Mingchuan and Li, YK and Wu, Yang and others},
  journal={arXiv preprint arXiv:2402.03300},
  year={2024}
}

@article{yu2026dapo,
  title={Dapo: An open-source llm reinforcement learning system at scale},
  author={Yu, Qiying and Zhang, Zheng and Zhu, Ruofei and Yuan, Yufeng and Zuo, Xiaochen and Yue, Yu and Dai, Weinan and Fan, Tiantian and Liu, Gaohong and Liu, Lingjun and others},
  journal={Advances in Neural Information Processing Systems},
  volume={38},
  pages={113222--113244},
  year={2026}
}

@article{hu2021lora,
  title={Lora: Low-rank adaptation of large language models},
  author={Hu, Edward J and Shen, Yelong and Wallis, Phillip and Allen-Zhu, Zeyuan and Li, Yuanzhi and Wang, Shean and Wang, Lu and Chen, Weizhu},
  journal={arXiv preprint arXiv:2106.09685},
  year={2021}
}

@article{liu2022few,
  title={Few-shot parameter-efficient fine-tuning is better and cheaper than in-context learning},
  author={Liu, Haokun and Tam, Derek and Muqeeth, Mohammed and Mohta, Jay and Huang, Tenghao and Bansal, Mohit and Raffel, Colin A},
  journal={Advances in neural information processing systems},
  volume={35},
  pages={1950--1965},
  year={2022}
}

@article{hendrycks2021measuring,
  title={Measuring mathematical problem solving with the math dataset},
  author={Hendrycks, Dan and Burns, Collin and Kadavath, Saurav and Arora, Akul and Basart, Steven and Tang, Eric and Song, Dawn and Steinhardt, Jacob},
  journal={arXiv preprint arXiv:2103.03874},
  year={2021}
}

@article{ouyang2022training,
  title={Training language models to follow instructions with human feedback},
  author={Ouyang, Long and Wu, Jeffrey and Jiang, Xu and Almeida, Diogo and Wainwright, Carroll and Mishkin, Pamela and Zhang, Chong and Agarwal, Sandhini and Slama, Katarina and Ray, Alex and others},
  journal={Advances in neural information processing systems},
  volume={35},
  pages={27730--27744},
  year={2022}
}

@inproceedings{martinez2024mitigating,
  title={Mitigating frequency bias and anisotropy in language model pre-training with syntactic smoothing},
  author={Martinez, Richard Diehl and Goriely, Z{\'e}bulon and Caines, Andrew and Buttery, Paula and Beinborn, Lisa},
  booktitle={Proceedings of the 2024 Conference on Empirical Methods in Natural Language Processing},
  pages={5999--6011},
  year={2024}
}

@inproceedings{kim2025reasoning,
  title={Reasoning circuits in language models: A mechanistic interpretation of syllogistic inference},
  author={Kim, Geonhee and Valentino, Marco and Freitas, Andre},
  booktitle={Findings of the Association for Computational Linguistics: ACL 2025},
  pages={10074--10095},
  year={2025}
}

@article{arora2026language,
  title={Language model circuits are sparse in the neuron basis},
  author={Arora, Aryaman and Wu, Zhengxuan and Steinhardt, Jacob and Schwettmann, Sarah},
  journal={arXiv preprint arXiv:2601.22594},
  year={2026}
}

@inproceedings{godey2024anisotropy,
  title={Anisotropy is inherent to self-attention in transformers},
  author={Godey, Nathan and Clergerie, {\'E}ric and Sagot, Beno{\^\i}t},
  booktitle={Proceedings of the 18th Conference of the European Chapter of the Association for Computational Linguistics (Volume 1: Long Papers)},
  pages={35--48},
  year={2024}
}

@article{gao2019representation,
  title={Representation degeneration problem in training natural language generation models},
  author={Gao, Jun and He, Di and Tan, Xu and Qin, Tao and Wang, Liwei and Liu, Tie-Yan},
  journal={arXiv preprint arXiv:1907.12009},
  year={2019}
}

@article{zou2023representation,
  title={Representation engineering: A top-down approach to ai transparency},
  author={Zou, Andy and Phan, Long and Chen, Sarah and Campbell, James and Guo, Phillip and Ren, Richard and Pan, Alexander and Yin, Xuwang and Mazeika, Mantas and Dombrowski, Ann-Kathrin and others},
  journal={arXiv preprint arXiv:2310.01405},
  year={2023}
}

@article{turner2024steering,
  title={Steering language models with activation engineering, 2024},
  author={Turner, Alexander Matt and Thiergart, Lisa and Leech, Gavin and Udell, David and Vazquez, Juan J and Mini, Ulisse and MacDiarmid, Monte},
  journal={URL https://arxiv. org/abs/2308.10248},
  volume={2308},
  year={2024}
}

@article{panickssery2023steering,
  title={Steering llama 2 via contrastive activation addition, 2024},
  author={Panickssery, Nina and Gabrieli, Nick and Schulz, Julian and Tong, Meg and Hubinger, Evan and Turner, Alexander Matt},
  journal={URL https://arxiv. org/abs/2312.06681},
  volume={3},
  year={2023}
}

@article{wu2404reft,
  title={Reft: Representation finetuning for language models, 2024},
  author={Wu, Zhengxuan and Arora, Aryaman and Wang, Zheng and Geiger, Atticus and Jurafsky, Dan and Manning, Christopher D and Potts, Christopher},
  journal={URL https://arxiv. org/abs/2404.03592}
}

@article{du2026does,
  title={What Does an LLM Learn from Reinforcement Learning? A Mechanistic Interpretability Perspective with Fixed-SAE Track},
  author={Du, Lingheng and Tang, Yiming and Duan, Xufeng and Liu, Dianbo},
  journal={arXiv preprint arXiv:2609.15064},
  year={2026}
}

@article{olsson2022context,
  title={In-context learning and induction heads},
  author={Olsson, Catherine and Elhage, Nelson and Nanda, Neel and Joseph, Nicholas and DasSarma, Nova and Henighan, Tom and Mann, Ben and Askell, Amanda and Bai, Yuntao and Chen, Anna and others},
  journal={arXiv preprint arXiv:2209.11895},
  year={2022}
}

@article{tang2026capsule,
  title={Capsule Lens: Locating and Tracking Concept Geometry in Model Representations},
  author={Tang, Yiming and Saini, Harshvardhan and Jha, Samyak and Chen, Huaming and Duan, Xufeng and Liu, Dianbo},
  journal={arXiv preprint arXiv:2609.05575},
  year={2026}
}

@article{zhao2511rep2text,
  title={Rep2text: Decoding full text from a single llm token representation, 2026},
  author={Zhao, Haiyan and He, Zirui and Tang, Yiming and Yang, Fan and Payani, Ali and Liu, Dianbo and Du, Mengnan},
  journal={URL https://arxiv. org/abs/2511.06571},
  volume={2511},
  year={2026}
}

@inproceedings{
dai2025san,
title={{SAN}: Hypothesizing Long-Term Synaptic Development and Neural Engram Mechanism in Scalable Model's Parameter-Efficient Fine-Tuning},
author={Gaole Dai and Chun-Kai Fan and Yiming Tang and Zhi Zhang and Yuan Zhang and Yulu Gan and Qizhe Zhang and Cheng-Ching Tseng and Shanghang Zhang and Tiejun Huang},
booktitle={Forty-second International Conference on Machine Learning},
year={2025},
url={https://openreview.net/forum?id=CeTXRRAdFi}
}

@article{liu2024dora,
  title={Dora: Weight-decomposed low-rank adaptation},
  author={Liu, Shih-Yang and Wang, Chien-Yi and Yin, Hongxu and Molchanov, Pavlo and Wang, Yu-Chiang Frank and Cheng, Kwang-Ting and Chen, Min-Hung},
  journal={arXiv preprint arXiv:2402.09353},
  year={2024}
}

@article{yang2024qwen2,
  title={Qwen2. 5-math technical report: Toward mathematical expert model via self-improvement},
  author={Yang, An and Zhang, Beichen and Hui, Binyuan and Gao, Bofei and Yu, Bowen and Li, Chengpeng and Liu, Dayiheng and Tu, Jianhong and Zhou, Jingren and Lin, Junyang and others},
  journal={arXiv preprint arXiv:2409.12122},
  year={2024}
}

@article{grattafiori2025llama,
  title={The Llama 3 herd of models},
  author={Grattafiori, Aaron and Dubey, Abhimanyu and Jauhri, Abhinav and Pandey, Abhinav and Kadian, Abhishek and Al-Dahle, Ahmad and Letman, Aiesha and Mathur, Akhil and Schelten, Alan and Vaughan, Alex and others},
  year={2025}
}

@inproceedings{cao2025step,
  title={Step guided reasoning: Improving mathematical reasoning using guidance generation and step reasoning},
  author={Cao, Lang and Zou, Yingtian and Peng, Chao and Chen, Renhong and Ning, Wu and Li, Yitong},
  booktitle={Proceedings of the 2025 Conference on Empirical Methods in Natural Language Processing},
  pages={21112--21129},
  year={2025}
}

@article{lewkowycz2022solving,
  title={Solving quantitative reasoning problems with language models},
  author={Lewkowycz, Aitor and Andreassen, Anders and Dohan, David and Dyer, Ethan and Michalewski, Henryk and Ramasesh, Vinay and Slone, Ambrose and Anil, Cem and Schlag, Imanol and Gutman-Solo, Theo and others},
  journal={Advances in neural information processing systems},
  volume={35},
  pages={3843--3857},
  year={2022}
}
\bibliographystyle{iclr2027_conference}

\appendix
\section{Appendix}
\label{sec:appendix-theory}

This appendix provides the formal derivations and geometric proofs supporting the empirical analysis in Section~3 and the parameter-efficient adaptation dynamics of \textsc{SphereGate} in Section~4.

\subsection{Forward-KL Demonstration Coverage vs. Advantage-Weighted Optimization}
\label{app:objective-geometry}

Let $x \sim d$ be a prompt sampled from the task distribution, $q(y \mid x)$ be the demonstration distribution, and $\pi_\theta(y \mid x)$ be an autoregressive policy over completions $y \in \mathcal{Y}$. We assume a finite completion space, a fixed reference policy $\pi_{\mathrm{ref}}$ with full support, and a bounded reward function $R(x, y)$.

\begin{proposition}[SFT Minimizes Expected Forward-KL]
\label{prop:sft-forward-kl}
The sequence negative log-likelihood loss satisfies
\begin{equation}
\mathcal{L}_{\mathrm{SFT}}(\theta) 
= -\mathbb{E}_{x \sim d,\, y \sim q}\bigl[\log \pi_\theta(y \mid x)\bigr] 
= H_q(Y \mid X) + \mathbb{E}_{x \sim d}\bigl[D_{\mathrm{KL}}\bigl(q(\cdot \mid x) \,\|\, \pi_\theta(\cdot \mid x)\bigr)\bigr].
\label{eq:app-sft-forward-kl}
\end{equation}
Because the conditional data entropy $H_q(Y \mid X)$ is invariant to $\theta$, maximum likelihood estimation under teacher forcing is strictly equivalent to minimizing the expected forward Kullback--Leibler divergence.
\end{proposition}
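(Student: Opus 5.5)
The plan is to expand the log-likelihood pointwise in $x$ and then average over $x\sim d$. Fix a prompt $x$. Because the completion space $\mathcal{Y}$ is finite, every expectation below is a finite sum, so no integrability issues arise. I will add and subtract $\log q(y\mid x)$ inside the expectation over $y\sim q(\cdot\mid x)$:
\[
-\mathbb{E}_{y\sim q}\bigl[\log \pi_\theta(y\mid x)\bigr]
= -\mathbb{E}_{y\sim q}\bigl[\log q(y\mid x)\bigr] + \mathbb{E}_{y\sim q}\Bigl[\log \tfrac{q(y\mid x)}{\pi_\theta(y\mid x)}\Bigr].
\]
The first term is the conditional entropy $H_q(Y\mid X=x)$. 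The second term is, by definition, $D_{\mathrm{KL}}\bigl(q(\cdot\mid x)\,\|\,\pi_\theta(\cdot\mid x)\bigr)$. Taking $\mathbb{E}_{x\sim d}$ and using linearity gives the stated identity, since $\mathbb{E}_{x\sim d}H_q(Y\mid X=x)=H_q(Y\mid X)$.

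The main obstacle is the support and convention bookkeeping; the algebra itself is routine. I will sum only over $y$ with $q(y\mid x)>0$, using the convention $0\log 0=0$, so the entropy term is finite. If $\pi_\theta(y\mid x)=0$ for some $y$ with $q(y\mid x)>0$, then both sides equal $+\infty$, and the identity still holds in the extended reals. I will state this explicitly. For an autoregressive softmax policy, $\pi_\theta>0$ everywhere, so in practice both sides are finite.

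For the equivalence claim, I will observe that $H_q(Y\mid X)$ depends only on $d$ and $q$, not on $\theta$. The two objectives therefore differ by a constant. Hence they have identical minimizers and identical gradients wherever they are differentiable.

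Finally, I will connect this to teacher forcing. By the chain rule, $\log\pi_\theta(y\mid x)=\sum_t \log\pi_\theta(y_t\mid x,y_{<t})$. This shows that the token-level cross-entropy summed over gold prefixes is exactly the sequence negative log-likelihood used above, so the identity applies to the loss as it is actually trained.
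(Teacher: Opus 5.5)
Your proposal is correct and follows essentially the same route as the paper: add and subtract $\log q(y\mid x)$, take expectations over $x\sim d$, $y\sim q(\cdot\mid x)$, note that the entropy term does not depend on $\theta$, and use the autoregressive chain rule to link the result to teacher-forced token cross-entropy. Your extra support bookkeeping is a harmless refinement that the paper leaves implicit; this covers the $0\log 0=0$ convention and the case where both sides equal $+\infty$.
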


\begin{proof}
By identity, $-\log \pi_\theta(y \mid x) = -\log q(y \mid x) + \log \frac{q(y \mid x)}{\pi_\theta(y \mid x)}$. Taking expectations with respect to $x \sim d$ and $y \sim q(\cdot \mid x)$ yields Equation~\eqref{eq:app-sft-forward-kl}. Since $\log \pi_\theta(y \mid x) = \sum_{t=1}^{|y|} \log \pi_\theta(y_t \mid x, y_{<t})$, the token-level cross-entropy loss sums directly to this sequence divergence.
\end{proof}

\begin{proposition}[Forward-KL Strongly Penalizes Dropping Empirical Modes]
\label{prop:forward-kl-covering}
Fix a prompt $x$ and let $S \subset \mathcal{Y}$ be any completion subset. Let $a = q(S \mid x) \in (0, 1)$ and $b = \pi_\theta(S \mid x) \in (0, 1)$. Then:
\begin{equation}
D_{\mathrm{KL}}\bigl(q(\cdot \mid x) \,\|\, \pi_\theta(\cdot \mid x)\bigr) \geq a \log \frac{a}{b} + (1 - a) \log \frac{1 - a}{1 - b}.
\label{eq:app-coverage-bound}
\end{equation}
Consequently, for any data-supported region ($a > 0$), $D_{\mathrm{KL}}(q \,\|\, \pi_\theta) \to +\infty$ as $b \to 0$.
\end{proposition}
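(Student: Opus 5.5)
The natural route is the data-processing inequality for KL divergence under the coarse-graining map $T:\mathcal{Y}\to\{0,1\}$, $T(y)=\mathbf{1}[y\in S]$. Pushing $q(\cdot\mid x)$ and $\pi_\theta(\cdot\mid x)$ forward through $T$ gives Bernoulli distributions with success probabilities $a$ and $b$. Their KL divergence is exactly the right-hand side of \eqref{eq:app-coverage-bound}.

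Since $\mathcal{Y}$ is finite, I will prove the inequality directly by the log-sum inequality rather than citing data processing abstractly. Split the sum
\[
D_{\mathrm{KL}}(q\,\|\,\pi_\theta)=\sum_{y}q(y\mid x)\log\frac{q(y\mid x)}{\pi_\theta(y\mid x)}
\]
into the part over $y\in S$ and the part over $y\in S^c$. On $S$, the log-sum inequality $\sum_i a_i\log(a_i/b_i)\ge(\sum_i a_i)\log(\sum_i a_i/\sum_i b_i)$ bounds that part below by $a\log(a/b)$. On $S^c$, the same inequality gives $(1-a)\log\frac{1-a}{1-b}$. Adding the two bounds yields the claim.

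Two conventions need checking. Terms with $q(y\mid x)=0$ contribute $0$. If some $y$ has $q(y\mid x)>0$ but $\pi_\theta(y\mid x)=0$, the left side is $+\infty$ and the inequality is trivial. The log-sum inequality itself follows from convexity of $t\mapsto t\log t$ (Jensen with weights $b_i/\sum_j b_j$), so I take it as standard.

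For the consequence, fix $a\in(0,1)$ and let $b\to0^+$. The term $a\log(a/b)=a\log a-a\log b$ tends to $+\infty$. The term $(1-a)\log\frac{1-a}{1-b}$ converges to the finite value $(1-a)\log(1-a)$. The lower bound therefore diverges, and so does $D_{\mathrm{KL}}$.

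The only delicate point is the statement's phrase ``for any data-supported region ($a>0$).'' The bound is only stated for $a\in(0,1)$. I will handle $a=1$ separately: in that case $D_{\mathrm{KL}}\ge\log(1/b)$ by the same argument on $S$ alone, which still diverges.

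I do not expect a genuine obstacle here. The main care is in the boundary and zero-probability conventions, so that the inequality holds in the extended reals.
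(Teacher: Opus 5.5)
Your proof is correct and is essentially the paper's argument. The paper writes $D_{\mathrm{KL}}(q\,\|\,\pi_\theta)$ as the binary KL between Bernoulli$(a)$ and Bernoulli$(b)$ plus $a$ and $(1-a)$ times the conditional KL divergences on $S$ and $S^c$, then drops those by nonnegativity; your block-wise log-sum inequality is exactly that nonnegativity step, and your extra handling of zero-probability conventions, the boundedness of the $(1-a)\log\frac{1-a}{1-b}$ term, and the case $a=1$ just makes explicit what the paper leaves implicit.
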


\begin{proof}
Suppressing conditioning on $x$, decompose the divergence across $S$ and $S^c$:
\begin{align}
D_{\mathrm{KL}}(q \,\|\, \pi_\theta) 
&= a \log \frac{a}{b} + (1 - a) \log \frac{1 - a}{1 - b} \nonumber \\
&\quad + a D_{\mathrm{KL}}\bigl(q(\cdot \mid S) \,\|\, \pi_\theta(\cdot \mid S)\bigr) 
+ (1 - a) D_{\mathrm{KL}}\bigl(q(\cdot \mid S^c) \,\|\, \pi_\theta(\cdot \mid S^c)\bigr).
\end{align}
Nonnegativity of the conditional KL divergences gives the lower bound in Equation~\eqref{eq:app-coverage-bound}. The limit follows immediately since $\lim_{b \to 0^+} a \log(a/b) = +\infty$.
\end{proof}

\begin{remark}[Scope of Forward-KL]
Proposition~\ref{prop:forward-kl-covering} proves that SFT places an infinite penalty on omitting demonstration modes. This explains why SFT fits demonstration frequencies rather than explicitly distinguishing correctness, establishing the shared coherence substrate.
\end{remark}

\begin{proposition}[KL-Regularized Policy Optimization as Reverse-KL Projection]
\label{prop:rl-reverse-kl}
Consider the sequence-level KL-regularized reward objective:
\begin{equation}
J_\beta(\pi) = \mathbb{E}_{x \sim d}\left[\mathbb{E}_{y \sim \pi(\cdot \mid x)}[R(x, y)] - \beta D_{\mathrm{KL}}\bigl(\pi(\cdot \mid x) \,\|\, \pi_{\mathrm{ref}}(\cdot \mid x)\bigr)\right], \quad \beta > 0.
\label{eq:app-regularized-reward}
\end{equation}
Define the partition function $Z_\beta(x) = \sum_y \pi_{\mathrm{ref}}(y \mid x) \exp(R(x, y)/\beta)$ and the tilted target distribution $\pi_\beta^*(y \mid x) = \frac{1}{Z_\beta(x)} \pi_{\mathrm{ref}}(y \mid x) \exp(R(x, y)/\beta)$. Then:
\begin{equation}
J_\beta(\pi) = \beta \mathbb{E}_{x \sim d}\bigl[\log Z_\beta(x)\bigr] - \beta \mathbb{E}_{x \sim d}\bigl[D_{\mathrm{KL}}\bigl(\pi(\cdot \mid x) \,\|\, \pi_\beta^*(\cdot \mid x)\bigr)\bigr].
\label{eq:app-rl-reverse-kl}
\end{equation}
\end{proposition}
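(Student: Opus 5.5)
The plan is to prove the identity pointwise in $x$ by expanding the reverse-KL divergence to the tilted target and matching it term by term against the objective; the result then follows by taking the expectation over $x\sim d$. Fix a prompt $x$ and suppress the conditioning. Under the standing assumptions (finite completion space, $\pi_{\mathrm{ref}}$ with full support, bounded $R$), the partition function satisfies $0<Z_\beta(x)<\infty$. Hence $\pi_\beta^*$ is a well-defined distribution with full support, and $\log \pi_\beta^*(y)$ is finite for every $y$. This means every expectation below is a finite sum, so no integrability issues arise.

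The key step is to write, directly from the definition of $\pi_\beta^*$,
\[
\log \pi_\beta^*(y) = \log \pi_{\mathrm{ref}}(y) + \frac{R(x,y)}{\beta} - \log Z_\beta(x),
\]
and substitute this into
\[
D_{\mathrm{KL}}(\pi\,\|\,\pi_\beta^*) = \mathbb{E}_{y\sim\pi}\bigl[\log \pi(y) - \log \pi_\beta^*(y)\bigr].
\]
Three terms result:
\begin{itemize}
\item the first two combine into $D_{\mathrm{KL}}(\pi\,\|\,\pi_{\mathrm{ref}})$;
\item the reward term gives $-\tfrac{1}{\beta}\mathbb{E}_{y\sim\pi}[R(x,y)]$;
\item the constant gives $+\log Z_\beta(x)$, since $\pi$ sums to one.
\end{itemize}
Multiplying by $-\beta$ and rearranging yields the pointwise identity
\[
\mathbb{E}_{\pi}[R] - \beta D_{\mathrm{KL}}(\pi\,\|\,\pi_{\mathrm{ref}}) = \beta\log Z_\beta(x) - \beta D_{\mathrm{KL}}(\pi\,\|\,\pi_\beta^*).
\]

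Taking $\mathbb{E}_{x\sim d}$ and using linearity gives equation~\eqref{eq:app-rl-reverse-kl}. Boundedness of $R$ bounds $\log Z_\beta$ uniformly in $x$, so the outer expectation of $\log Z_\beta$ is finite.

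The only point needing care is completions with $\pi(y)=0$. We adopt the convention $0\log 0=0$. These terms then vanish on both sides, because the expansion is taken only over the support of $\pi$, and $\pi_\beta^*$ is positive there. This is the main (mild) obstacle, and full support of $\pi_{\mathrm{ref}}$ disposes of it.

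As a remark, the identity also shows that $J_\beta$ is maximized exactly at $\pi=\pi_\beta^*$. This follows from nonnegativity of KL and its vanishing only at equality, in the same way nonnegativity was used in Proposition~\ref{prop:forward-kl-covering}.
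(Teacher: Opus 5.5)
Your proposal is correct and follows essentially the same route as the paper: substitute $\log\pi_\beta^* = \log\pi_{\mathrm{ref}} + R/\beta - \log Z_\beta$ into $D_{\mathrm{KL}}(\pi\,\|\,\pi_\beta^*)$, multiply by $-\beta$, and average over $x\sim d$. Your additional checks are sound details the paper leaves implicit: that $0<Z_\beta<\infty$, the support convention for $\pi(y)=0$, and that $\log Z_\beta$ is bounded, which justifies splitting the outer expectation.
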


\begin{proof}
Expanding the divergence between $\pi$ and $\pi_\beta^*$ for a fixed prompt $x$:
\begin{align}
D_{\mathrm{KL}}(\pi \,\|\, \pi_\beta^*) 
&= \mathbb{E}_{y \sim \pi}\left[\log \frac{\pi(y \mid x)}{\pi_{\mathrm{ref}}(y \mid x)} - \frac{R(x, y)}{\beta} + \log Z_\beta(x)\right] \nonumber \\
&= D_{\mathrm{KL}}(\pi \,\|\, \pi_{\mathrm{ref}}) - \frac{1}{\beta} \mathbb{E}_{y \sim \pi}[R(x, y)] + \log Z_\beta(x).
\end{align}
Multiplying through by $-\beta$ and taking expectations over $x \sim d$ yields Equation~\eqref{eq:app-rl-reverse-kl}.
\end{proof}

\paragraph{Surrogate Gradient of GRPO.}
In practical Group Relative Policy Optimization \citep{shao2024deepseekmath}, $G \ge 2$ trajectories $\{y_i\}_{i=1}^G$ are sampled from the current rollout checkpoint $\mu = \pi_{\theta_{\mathrm{old}}}$. Evaluating the token-averaged PPO-clip surrogate at $\theta = \theta_{\mathrm{old}}$, the policy ratio equals $1$ and clipping is inactive, yielding the local gradient:
\begin{equation}
\left.\nabla_\theta \widehat{J}_{\mathrm{rew}}(\theta)\right|_{\theta_{\mathrm{old}}} = \left.\frac{1}{G} \sum_{i=1}^G \frac{\widehat{A}_i}{|y_i|} \nabla_\theta \log \pi_\theta(y_i \mid x)\right|_{\theta_{\mathrm{old}}},
\label{eq:app-grpo-local-gradient}
\end{equation}
where $\widehat{A}_i = \frac{R_i - \bar{R}}{s_R + \delta}$ is the group-standardized advantage ($s_R$ being the sample standard deviation of rewards).

\subsection{Reward-Selective Gradients under Group Advantage Centering}
\label{app:centered-gradients}

Let $s_i = \left.\frac{1}{|y_i|} \nabla_\theta \log \pi_\theta(y_i \mid x)\right|_{\theta_{\mathrm{old}}}$ be the length-normalized score vector of completion $i$, with group mean $\bar{s} = \frac{1}{G} \sum_{i=1}^G s_i$. The unregularized reward gradient is $g_{\mathrm{rew}} = \frac{1}{G} \sum_{i=1}^G \widehat{A}_i s_i$.

\begin{proposition}[Centered Update as Reward--Score Covariance]
\label{prop:reward-score-covariance}
The local reward gradient under group normalization is proportional to the sample covariance between scalar rewards and score vectors:
\begin{equation}
g_{\mathrm{rew}} = \frac{1}{G(s_R + \delta)} \sum_{i=1}^G (R_i - \bar{R})(s_i - \bar{s}).
\label{eq:app-reward-score-covariance}
\end{equation}
\end{proposition}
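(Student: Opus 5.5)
The identity is purely algebraic, so the plan is to rewrite $g_{\mathrm{rew}}$ directly from its definition. First I will substitute the group-standardized advantage $\widehat{A}_i = (R_i-\bar R)/(s_R+\delta)$ into $g_{\mathrm{rew}} = \frac{1}{G}\sum_{i=1}^G \widehat{A}_i s_i$. The factor $1/(s_R+\delta)$ does not depend on $i$, so it comes outside the sum:
\[
g_{\mathrm{rew}} = \frac{1}{G(s_R+\delta)}\sum_{i=1}^G (R_i-\bar R)\, s_i .
\]
Here $s_R+\delta>0$, so the division is well defined. This is the regime $\delta>0$, or $s_R>0$ when the group rewards are not all equal.

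The only substantive step is to replace $s_i$ by the centered score $s_i-\bar s$. Because $\bar R$ is the group mean of the rewards, the centered rewards sum to zero:
\[
\sum_{i=1}^G (R_i-\bar R)=0 .
\]
Hence
\[
\sum_{i=1}^G (R_i-\bar R)\,\bar s = \bar s\sum_{i=1}^G (R_i-\bar R) = 0 .
\]
Subtracting this zero vector from the sum above gives exactly Equation~\eqref{eq:app-reward-score-covariance}. The vector $\bar s$ can be factored out because it is the same for every $i$ and the scalar coefficients are real.

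Finally, I will note why this is a covariance. The sum $\frac{1}{G}\sum_i (R_i-\bar R)(s_i-\bar s)$ is the (biased) empirical cross-covariance between the scalar reward and each coordinate of the score vector. So $g_{\mathrm{rew}}$ equals this covariance scaled by the positive constant $1/(s_R+\delta)$, which gives the claimed proportionality. A consequence worth stating is that any score component shared by all completions in the group, such as a common offset in $s_i$, cancels exactly.

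There is no real obstacle here. The one point to state carefully is that the evaluation is at $\theta_{\mathrm{old}}$, where Equation~\eqref{eq:app-grpo-local-gradient} holds because the ratio equals $1$ and clipping is inactive. That is what makes $g_{\mathrm{rew}}$ the stated linear combination of the $s_i$ in the first place.
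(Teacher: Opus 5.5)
Your proof is correct and follows the paper's argument: substitute $\widehat{A}_i=(R_i-\bar R)/(s_R+\delta)$, factor out the $i$-independent constant, and subtract the vanishing term $\sum_{i=1}^G (R_i-\bar R)\,\bar s$. That term vanishes because the centered rewards sum to zero. Your extra remarks, that $s_R+\delta>0$ is needed and that the sum is the (biased) empirical cross-covariance, are harmless additions to the same argument.
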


\begin{proof}
Substituting $\widehat{A}_i = (R_i - \bar{R})/(s_R + \delta)$:
\begin{equation}
g_{\mathrm{rew}} = \frac{1}{G(s_R + \delta)} \sum_{i=1}^G (R_i - \bar{R}) s_i.
\end{equation}
Because the centered rewards sum to zero ($\sum_{i=1}^G (R_i - \bar{R}) = 0$), we have $\sum_{i=1}^G (R_i - \bar{R}) \bar{s} = 0$. Subtracting this term yields Equation~\eqref{eq:app-reward-score-covariance}.
\end{proof}

\begin{corollary}[Shared Score Invariance]
\label{cor:score-cancellation}
Decompose each completion score into a component shared across the rollout group and a completion-specific residual: $s_i = c + d_i$. For any parameter direction $v$, if $v^\top s_i$ is invariant across all completions in the group ($v^\top d_i = 0$), then:
\begin{equation}
v^\top g_{\mathrm{rew}} = 0.
\label{eq:app-common-direction-cancellation}
\end{equation}
\end{corollary}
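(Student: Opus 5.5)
The plan is to expand both $\nabla_C J$ and $\nabla_C g_{\mathrm{rew}}$ linearly in the group scores and show that the shared part drops out. Write $g_{\mathrm{rew}}=\frac{1}{G}\sum_{i=1}^G\widehat{A}_i s_i$ exactly as in Proposition~\ref{prop:reward-score-covariance}. Fix a direction $v$ and contract it against this gradient. Substituting the decomposition $s_i=c+d_i$ gives
\[
v^\top g_{\mathrm{rew}}=\frac{1}{G}\sum_{i=1}^G\widehat{A}_i\,v^\top c+\frac{1}{G}\sum_{i=1}^G\widehat{A}_i\,v^\top d_i .
\]
The second sum vanishes term by term under the hypothesis $v^\top d_i=0$. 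What remains is $(v^\top c)\cdot\frac{1}{G}\sum_i\widehat{A}_i$.

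The key step is to show that the group-standardized advantages sum to zero. Indeed,
\[
\sum_{i=1}^G\widehat{A}_i=\frac{1}{s_R+\delta}\sum_{i=1}^G(R_i-\bar R)=0,
\]
because $\bar R$ is the group mean and the denominator $s_R+\delta$ is a common scalar. This is the same cancellation already used in the proof of Proposition~\ref{prop:reward-score-covariance}. Hence the remaining term vanishes and $v^\top g_{\mathrm{rew}}=0$.

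An equivalent route goes through the covariance form \eqref{eq:app-reward-score-covariance}. Since $v^\top s_i=v^\top c$ is constant across $i$, its group mean is $v^\top\bar s=v^\top c$. Therefore $v^\top(s_i-\bar s)=0$ for every $i$, and each summand of the covariance vanishes.

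There is essentially no obstacle here. The only point needing care concerns the case where all rewards in the group are equal. Then $s_R=0$, and we need $\delta>0$ for the advantages to be defined; in that case every $\widehat{A}_i=0$ and the conclusion holds trivially.

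We should also note the scope of the claim. It concerns the unregularized reward gradient at $\theta_{\mathrm{old}}$, i.e.\ \eqref{eq:app-grpo-local-gradient}. The KL term is not covered.
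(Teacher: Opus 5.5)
Your proof is correct and matches the paper's: your second route is exactly the paper's argument (project the covariance form onto $v$ and note that $v^\top(s_i-\bar s)=0$ for every $i$), and your main route rests on the same identity $\sum_i (R_i-\bar R)=0$, applied to $\sum_i \widehat{A}_i$ before the scores are centered rather than after. Your opening mention of $\nabla_C J$ plays no role in the argument and can be dropped.
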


\begin{proof}
Projecting Equation~\eqref{eq:app-reward-score-covariance} onto $v$:
\begin{equation}
v^\top g_{\mathrm{rew}} = \frac{1}{G(s_R + \delta)} \sum_{i=1}^G (R_i - \bar{R}) \bigl(v^\top s_i - v^\top \bar{s}\bigr).
\end{equation}
If $v^\top s_i = v^\top c$ for all $i \in \{1, \dots, G\}$, then $v^\top \bar{s} = v^\top c$. Every term in the sum vanishes identically.
\end{proof}

\begin{remark}[Scope of the Covariance Form]
Corollary~\ref{cor:score-cancellation} provides an objective-level motivation for why coherence representations remain stable during RL: features common to both successful and unsuccessful completions produce zero reward gradient under group centering. Note that shared activations do not strictly imply identical score vectors (which also depend on downstream sensitivities), but this establishes a strong structural bias against modifying universally shared sequence components.
\end{remark}

\subsection{Local Geometry and Selective Adaptation of Residual Gates}
\label{app:sg-gate-stability}

We derive the geometry of positive residual gates,
prove Proposition~\ref{prop:sg-conditional-stability},
and express its stability conditions through the
GRPO reward signal and the KL-regularized objective.
Throughout, $\|\cdot\|_F$ denotes the Frobenius norm.

\paragraph{Positive residual gates.}
For a matrix of pre-gate token activations
$\widetilde{H}_\ell$, the layer-$\ell$ gate gives
\[
    H_\ell=\widetilde{H}_\ell G_\ell,
    \qquad
    G_\ell=\operatorname{diag}\bigl(\exp(\theta_\ell)\bigr).
\]
At initialization, $\theta_\ell=0$ and $G_\ell=I$.
For $|\theta_{\ell k}|\leq c$, every diagonal entry
lies in $[e^{-c},e^c]$. Hence $G_\ell$ is invertible,
and the gate preserves the entrywise signs, zero
pattern, and rank of its pre-gate input. Furthermore,
\begin{equation}
    \begin{aligned}
        e^{-c}\|\widetilde{H}_\ell\|_F
        &\leq\|H_\ell\|_F
        \leq e^c\|\widetilde{H}_\ell\|_F,\\
        \|H_\ell-\widetilde{H}_\ell\|_F
        &\leq(e^c-1)\|\widetilde{H}_\ell\|_F.
    \end{aligned}
    \label{eq:sg-app-positive-gate-bounds}
\end{equation}
These inequalities follow by applying the scalar
gain bounds to each column.

Within the interior of the log-gain bounds,
\[
    \frac{\partial h_{\ell tk}}
         {\partial\theta_{\ell k}}
    =
    \widetilde{h}_{\ell tk}e^{\theta_{\ell k}}
    =
    h_{\ell tk}.
\]
The chain rule, summed over token positions, therefore
gives
\[
    \frac{\partial\mathcal{L}}
         {\partial\theta_{\ell k}}
    =
    \sum_{t\in\mathcal{T}}
    \frac{\partial\mathcal{L}}
         {\partial h_{\ell tk}}
    h_{\ell tk},
\]
establishing Equation~\eqref{eq:sg-log-gain-gradient}.

\paragraph{Direct gate geometry.}
Let $\Delta=\theta_1-\theta_0$. Evaluate the direct
gate intervention by fixing each pre-gate input
at its value under $\theta_0$. For the flattened
layer-channel index $j$,
\begin{equation}
    \Delta h^{\mathrm{direct}}_{tj}
    =
    h_{tj}(\theta_0)\bigl(e^{\Delta_j}-1\bigr).
    \label{eq:sg-app-direct-intervention}
\end{equation}
Consequently, for any coordinate group $g$,
\begin{equation}
    \|\Delta H_g^{\mathrm{direct}}\|_F^2
    =
    \sum_{j\in g}E_j\bigl(e^{\Delta_j}-1\bigr)^2,
    \label{eq:sg-app-exact-displacement}
\end{equation}
where $E_j=\sum_{t\in\mathcal{T}}h_{tj}(\theta_0)^2$.
For groups spanning multiple layers, the squared
norm sums over their layer-token-channel entries.

As $\Delta_g\to0$, Taylor expansion yields
\begin{equation}
    \|\Delta H_g^{\mathrm{direct}}\|_F^2
    =
    \Delta_g^\top D_g\Delta_g
    +O\bigl(\|\Delta_g\|_2^3\bigr).
    \label{eq:sg-app-local-metric}
\end{equation}
Thus, $D$ is the local quadratic metric of direct
gate-induced representation displacement.

A finite-displacement bound follows from the mean
value theorem. Define
$\delta_g=\|\Delta_g\|_\infty
=\max_{j\in g}|\Delta_j|$.
For every $|u|\leq\delta_g$,
\[
    e^{-\delta_g}|u|
    \leq|e^u-1|
    \leq e^{\delta_g}|u|.
\]
Substitution into
Equation~\eqref{eq:sg-app-exact-displacement} gives
\begin{equation}
    e^{-\delta_g}\|D_g^{1/2}\Delta_g\|_2
    \leq
    \|\Delta H_g^{\mathrm{direct}}\|_F
    \leq
    e^{\delta_g}\|D_g^{1/2}\Delta_g\|_2.
    \label{eq:sg-app-two-sided-displacement}
\end{equation}

\paragraph{Proof of energy-weighted gate stability.}

\begin{proof}[Proof of Proposition~\ref{prop:sg-conditional-stability}]
Define the full normalized gradient vectors
\[
    a=D^{-1/2}\nabla J(\theta_0),
    \qquad
    e=D^{-1/2}\nabla J(\theta_1).
\]
By Equation~\eqref{eq:sg-normalized-signals},
$\|a_C\|_2=\varepsilon_C$ and
$\|e_C\|_2=\tau_C$.

The fundamental theorem of calculus gives
\[
    \nabla J(\theta_1)-\nabla J(\theta_0)
    =
    \left[
        \int_0^1
        \nabla^2J(\theta_0+s\Delta)\,ds
    \right]\Delta.
\]
Multiplying by $D^{-1/2}$ and using
$z=D^{1/2}\Delta$ yields
\begin{equation}
    \mathcal{H}z=a-e.
    \label{eq:sg-app-gradient-displacement}
\end{equation}
Taking the selected-coordinate block,
\[
    \mathcal{H}_{CC}z_C
    =
    a_C-e_C-\mathcal{H}_{CR}z_R.
\]
Since $\mathcal{H}_{CC}\succeq\lambda I_{|C|}$
with $\lambda>0$,
\[
    \|\mathcal{H}_{CC}^{-1}\|_2
    \leq\lambda^{-1}.
\]
It follows that
\begin{align*}
    \|z_C\|_2
    &\leq
    \|\mathcal{H}_{CC}^{-1}\|_2
    \left(
        \|a_C\|_2+\|e_C\|_2
        +\|\mathcal{H}_{CR}\|_2\|z_R\|_2
    \right)\\
    &\leq
    \frac{\varepsilon_C+\tau_C}{\lambda}
    +\rho\|z_R\|_2
    =B_C.
\end{align*}
This proves Equation~\eqref{eq:sg-group-stability}.
For every $j\in C$,
\[
    |\Delta_j|
    =
    \frac{|z_j|}{\sqrt{E_j}}
    \leq
    \frac{\|z_C\|_2}{\sqrt{E_j}}
    \leq
    \frac{B_C}{\sqrt{E_j}},
\]
which establishes
Equation~\eqref{eq:sg-coordinate-stability}.
\end{proof}

\paragraph{Selective adaptation and representation stability.}
Suppose $\|z_R\|_2>0$ and
\[
    \frac{\varepsilon_C+\tau_C}{\lambda}
    \leq\alpha\|z_R\|_2,
    \qquad
    \alpha\geq0,\quad q:=\alpha+\rho<1.
\]
Then Proposition~\ref{prop:sg-conditional-stability}
gives
\begin{equation}
    \|z_C\|_2
    \leq q\|z_R\|_2
    <\|z_R\|_2.
    \label{eq:sg-app-selective-adaptation}
\end{equation}
The selected group therefore undergoes smaller
aggregate energy-weighted log-gain movement.

Combining the same proposition with
Equation~\eqref{eq:sg-app-two-sided-displacement}
gives the direct representation bound
\begin{equation}
    \|\Delta H_C^{\mathrm{direct}}\|_F
    \leq
    e^{\delta_C}
    \left(
        \frac{\varepsilon_C+\tau_C}{\lambda}
        +\rho\|z_R\|_2
    \right).
    \label{eq:sg-app-representation-stability}
\end{equation}
Moreover,
\begin{equation}
    \frac{\|\Delta H_C^{\mathrm{direct}}\|_F}
         {\|\Delta H_R^{\mathrm{direct}}\|_F}
    \leq e^{\delta_C+\delta_R}q.
    \label{eq:sg-app-direct-selectivity}
\end{equation}
Thus, $e^{\delta_C+\delta_R}q<1$ establishes smaller
direct representation displacement in the selected
group. At an interior stationary endpoint,
$\nabla J(\theta_1)=0$ and $\tau_C=0$.

\paragraph{The fixed-batch GRPO reward surrogate.}
Let a rollout batch contain $B$ prompts $x_b$ and
$G\geq2$ completions $y_{bi}$ per prompt, with
completion lengths $T_{bi}$ and rewards $r_{bi}$.
Define the group mean reward and normalized advantages
by
\[
    \bar r_b=\frac1G\sum_{i=1}^G r_{bi},
    \qquad
    A_{bi}
    =
    \frac{r_{bi}-\bar r_b}
         {s_{r,b}+\epsilon_{\mathrm{adv}}},
\]
where $s_{r,b}$ is the group reward standard deviation
used by the algorithm and $\epsilon_{\mathrm{adv}}>0$
is its stabilizing constant.

For completion token $t$, define the policy ratio
\[
    \varrho_{bit}(\theta)
    =
    \frac{
        \pi_\theta(y_{bi,t}\mid x_b,y_{bi,<t})
    }{
        \pi_{\theta_0}(y_{bi,t}\mid x_b,y_{bi,<t})
    }.
\]
With clipping width $\epsilon_{\mathrm{clip}}\in(0,1)$,
the length-normalized reward surrogate is
\begin{equation}
    \begin{aligned}
        U(\theta)
        &=
        \frac{1}{BG}
        \sum_{b=1}^B\sum_{i=1}^G
        \frac{1}{T_{bi}}
        \sum_{t=1}^{T_{bi}}
        \min\Bigl\{
            \varrho_{bit}(\theta)A_{bi},\\
        &\hspace{35mm}
            \operatorname{clip}\bigl(
                \varrho_{bit}(\theta),
                1-\epsilon_{\mathrm{clip}},
                1+\epsilon_{\mathrm{clip}}
            \bigr)A_{bi}
        \Bigr\}.
    \end{aligned}
    \label{eq:sg-app-grpo-surrogate}
\end{equation}
All sampled tokens and advantages remain fixed
when differentiating this surrogate.

At $\theta_0$, every ratio equals one.
Define the length-normalized completion score
\[
    s_{bi}
    =
    \left.
    \frac{1}{T_{bi}}
    \nabla_\theta
    \log\pi_\theta(y_{bi}\mid x_b)
    \right|_{\theta=\theta_0},
    \qquad
    \bar s_b=\frac1G\sum_i s_{bi}.
\]
The reward gradient contributed by prompt $b$ is
\begin{equation}
    \begin{aligned}
        g_b^{\mathrm{rew}}
        &=
        \frac1G\sum_i A_{bi}s_{bi}\\
        &=
        \frac{1}{G(s_{r,b}+\epsilon_{\mathrm{adv}})}
        \sum_i
        (r_{bi}-\bar r_b)(s_{bi}-\bar s_b).
    \end{aligned}
    \label{eq:sg-app-reward-covariance}
\end{equation}
The second equality follows from
$\sum_i(r_{bi}-\bar r_b)=0$.
Consequently,
$\nabla U(\theta_0)=B^{-1}\sum_b g_b^{\mathrm{rew}}$.

\paragraph{Bounding the normalized reward signal.}
Define empirical variances with denominator $G$:
\[
    \sigma_{r,b}^2
    =
    \frac1G\sum_i(r_{bi}-\bar r_b)^2,
    \qquad
    \sigma_{s,b,j}^2
    =
    \frac1G\sum_i(s_{bi,j}-\bar s_{b,j})^2.
\]
Applying Cauchy--Schwarz to
Equation~\eqref{eq:sg-app-reward-covariance} gives
\[
    |g_{b,j}^{\mathrm{rew}}|
    \leq
    \frac{\sigma_{r,b}}
         {s_{r,b}+\epsilon_{\mathrm{adv}}}
    \sigma_{s,b,j}.
\]
Squaring, dividing by $E_j$, and summing over
$j\in C$ yields
\[
    \|D_C^{-1/2}g_{b,C}^{\mathrm{rew}}\|_2
    \leq
    \frac{\sigma_{r,b}}
         {s_{r,b}+\epsilon_{\mathrm{adv}}}
    \left(
        \sum_{j\in C}
        \frac{\sigma_{s,b,j}^2}{E_j}
    \right)^{1/2}.
\]
Therefore, defining
\begin{equation}
    \varepsilon_C^{\mathrm{rew}}
    =
    \frac1B\sum_{b=1}^B
    \frac{\sigma_{r,b}}
         {s_{r,b}+\epsilon_{\mathrm{adv}}}
    \left(
        \sum_{j\in C}
        \frac{\sigma_{s,b,j}^2}{E_j}
    \right)^{1/2},
    \label{eq:sg-app-reward-signal-bound}
\end{equation}
the triangle inequality gives
\[
    \|D_C^{-1/2}\nabla_C U(\theta_0)\|_2
    \leq\varepsilon_C^{\mathrm{rew}}.
\]
Since $J=U-\beta K$,
\begin{equation}
    \varepsilon_C
    \leq
    \varepsilon_C^{\mathrm{rew}}
    +
    \beta
    \|D_C^{-1/2}\nabla_C K(\theta_0)\|_2.
    \label{eq:sg-app-full-signal-bound}
\end{equation}
This expresses the initial normalized optimization
signal in terms of reward-correlated score variation
and the reference-policy gradient. A score coordinate
shared across all completions in a group has
$\sigma_{s,b,j}=0$ and contributes zero reward gradient.

\paragraph{Sufficient curvature conditions.}
The curvature constants in the stability theorem
can be obtained from the reward and KL terms
separately. For $s\in[0,1]$, define
\[
    A_K(s)
    =
    D^{-1/2}
    \nabla^2K(\theta_0+s\Delta)
    D^{-1/2},
    \qquad
    A_U(s)
    =
    D^{-1/2}
    \nabla^2U(\theta_0+s\Delta)
    D^{-1/2}.
\]
Suppose that throughout the segment,
\begin{equation}
    \begin{aligned}
        [A_K(s)]_{CC}
        &\succeq\kappa I_{|C|},
        &
        [A_U(s)]_{CC}
        &\preceq\gamma I_{|C|},\\
        \|[A_K(s)]_{CR}\|_2
        &\leq\kappa_\times,
        &
        \|[A_U(s)]_{CR}\|_2
        &\leq\gamma_\times,
    \end{aligned}
    \label{eq:sg-app-component-curvature}
\end{equation}
where $\kappa>0$,
$\gamma,\kappa_\times,\gamma_\times\geq0$,
and $\beta\kappa>\gamma$.
Because
\[
    \mathcal{H}
    =
    \int_0^1
    \bigl(\beta A_K(s)-A_U(s)\bigr)\,ds,
\]
integration and the triangle inequality give
\[
    \mathcal{H}_{CC}
    \succeq(\beta\kappa-\gamma)I_{|C|},
    \qquad
    \|\mathcal{H}_{CR}\|_2
    \leq\beta\kappa_\times+\gamma_\times.
\]
Thus, Proposition~\ref{prop:sg-conditional-stability}
applies with
\begin{equation}
    \lambda=\beta\kappa-\gamma,
    \qquad
    \rho=
    \frac{\beta\kappa_\times+\gamma_\times}
         {\beta\kappa-\gamma}.
    \label{eq:sg-app-curvature-constants}
\end{equation}
These constants quantify the restoring curvature
and cross-group coupling of the complete
KL-regularized objective.

\paragraph{Curvature at the reference policy.}
For a fixed prompt distribution $d$ and a smooth,
full-support policy on a finite completion space,
consider the sequence-level KL
\[
    K_{\mathrm{seq}}(\theta)
    =
    \mathbb{E}_{x\sim d}
    D_{\mathrm{KL}}\bigl(
        \pi_\theta(\cdot\mid x)
        \,\|\,\pi_{\mathrm{ref}}(\cdot\mid x)
    \bigr).
\]
Let $\theta_{\mathrm{ref}}$ satisfy
$\pi_{\theta_{\mathrm{ref}}}=\pi_{\mathrm{ref}}$.
Normalization of $\pi_\theta$ gives
\[
    \nabla K_{\mathrm{seq}}(\theta)
    =
    \mathbb{E}_{x\sim d}
    \sum_y
    \nabla\pi_\theta(y\mid x)
    \log
    \frac{\pi_\theta(y\mid x)}
         {\pi_{\mathrm{ref}}(y\mid x)}.
\]
At $\theta_{\mathrm{ref}}$, the logarithm vanishes,
so $\nabla K_{\mathrm{seq}}(\theta_{\mathrm{ref}})=0$.
Differentiating once more yields
\begin{equation}
    \nabla^2K_{\mathrm{seq}}(\theta_{\mathrm{ref}})
    =
    \mathbb{E}_{x\sim d,\;y\sim\pi_{\mathrm{ref}}}
    [q(x,y)q(x,y)^\top],
    \label{eq:sg-app-fisher-curvature}
\end{equation}
where
\[
    q(x,y)
    =
    \left.
    \nabla_\theta\log\pi_\theta(y\mid x)
    \right|_{\theta=\theta_{\mathrm{ref}}}.
\]
Hence the reference-policy curvature is the
Fisher matrix.

The sampled token-level $k_3$ penalty has an
analogous empirical identity. For fixed contexts
$\xi_n$, tokens $a_n$, and positive weights
$\omega_n$ summing to one, define
\[
    \chi_n(\theta)
    =
    \frac{\pi_{\mathrm{ref}}(a_n\mid\xi_n)}
         {\pi_\theta(a_n\mid\xi_n)},
    \qquad
    K_{k_3}(\theta)
    =
    \sum_n\omega_n
    \bigl(\chi_n(\theta)-\log\chi_n(\theta)-1\bigr).
\]
Writing
$q_n(\theta)=
 \nabla_\theta\log\pi_\theta(a_n\mid\xi_n)$,
differentiation gives
\[
    \nabla K_{k_3}(\theta)
    =
    \sum_n\omega_n
    \bigl(1-\chi_n(\theta)\bigr)q_n(\theta).
\]
At the reference-policy anchor, $\chi_n=1$, and
therefore
\begin{equation}
    \begin{aligned}
        \nabla K_{k_3}(\theta_{\mathrm{ref}})
        &=0,\\
        \nabla^2K_{k_3}(\theta_{\mathrm{ref}})
        &=
        \sum_n\omega_n
        q_n(\theta_{\mathrm{ref}})
        q_n(\theta_{\mathrm{ref}})^\top.
    \end{aligned}
    \label{eq:sg-app-sampled-kl-curvature}
\end{equation}
Energy normalization transforms these Fisher and
empirical Gram matrices into Gram matrices of
energy-normalized policy scores.

For either penalty, when
$\theta_0=\theta_{\mathrm{ref}}$,
Equation~\eqref{eq:sg-app-full-signal-bound} becomes
\[
    \varepsilon_C
    \leq\varepsilon_C^{\mathrm{rew}}.
\]
Together, the reward-signal bound, the segment-wise
curvature conditions, and the direct gate geometry
establish an explicit criterion for selective
adaptation in the energy-weighted representation metric.

\subsection{Channel Subspace Metrics and Calibrations}
\label{app:subspace-geometry}

This section provides formal proofs for the empirical subspace metrics evaluated in Section~3.

\begin{proposition}[Overlap as Mean-Squared Principal Cosine]
\label{prop:overlap-derivation}
Let $U_g^+ \in \mathbb{R}^{D \times r_g^+}$ and $U_g^- \in \mathbb{R}^{D \times r_g^-}$ denote orthonormal bases for the correct and wrong representation subspaces in channel group $g \in \{C, R\}$, with $k = \min(r_g^+, r_g^-)$. The overlap metric defined in Equation~(2) of the main text satisfies:
\begin{equation}
\mathcal{O}_g^2 = \frac{\|(U_g^+)^\top U_g^-\|_F^2}{k} = \frac{1}{k} \sum_{j=1}^k \cos^2 \phi_j = \frac{\operatorname{tr}(P_g^+ P_g^-)}{k}, \qquad \mathcal{O}_g \in [0, 1],
\end{equation}
where $\phi_1 \le \dots \le \phi_k$ are the principal angles between the two subspaces, and $P_g^\pm = U_g^\pm (U_g^\pm)^\top$.
\end{proposition}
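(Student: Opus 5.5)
The plan is to establish the three expressions for $\mathcal{O}_g^2$ in turn, and then read off the range $[0,1]$ from the principal-cosine form. Write $M=(U_g^+)^\top U_g^-\in\mathbb{R}^{r_g^+\times r_g^-}$. The first equality holds by definition, since it is simply the square of Equation~\eqref{eq:correct-wrong-overlap}.

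For the trace identity, I will use the cyclic property of the trace together with orthonormality:
\[
\|M\|_F^2=\operatorname{tr}(M^\top M)=\operatorname{tr}\bigl((U_g^-)^\top U_g^+(U_g^+)^\top U_g^-\bigr)=\operatorname{tr}\bigl(U_g^+(U_g^+)^\top U_g^-(U_g^-)^\top\bigr)=\operatorname{tr}(P_g^+P_g^-).
\]
This step is routine.

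For the principal-angle form, I will invoke the standard characterization of principal angles (Björck--Golub). Take the singular value decomposition $M=Y\Sigma W^\top$. The singular values $\sigma_1\ge\dots\ge\sigma_k$ of $M$ are exactly $\cos\phi_1,\dots,\cos\phi_k$. To justify this, I will use the variational definition $\cos\phi_j=\max u^\top v$, where $u$ ranges over the first subspace and $v$ over the second, both unit vectors and orthogonal to the previously chosen pairs. Writing $u=U_g^+a$ and $v=U_g^-b$ with $a,b$ unit vectors turns the objective into $u^\top v=a^\top Mb$, and the successive maxima of this bilinear form are exactly the singular values of $M$. Since $M$ has at most $k=\min(r_g^+,r_g^-)$ singular values, we get $\|M\|_F^2=\sum_{j=1}^k\sigma_j^2=\sum_j\cos^2\phi_j$. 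Matching the ordering $\phi_1\le\dots\le\phi_k$ with decreasing $\sigma_j$ is only a matter of indexing.

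For the range, I will show that the singular values of $M$ satisfy $0\le\sigma_j\le 1$. This follows because $\|M\|_2\le\|U_g^+\|_2\|U_g^-\|_2=1$, as both bases have orthonormal columns. Hence $0\le\sum_{j=1}^k\cos^2\phi_j\le k$, and after dividing by $k$ and taking the square root, $\mathcal{O}_g\in[0,1]$. The only step needing care is the identification of singular values with principal cosines. I will either cite it as standard or include the short variational argument above. Nothing in the statement depends on earlier propositions beyond the definition in Equation~\eqref{eq:correct-wrong-overlap}.
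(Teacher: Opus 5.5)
Your proposal is correct and follows the same route as the paper: the first equality is the definition squared, the trace form comes from cyclicity of the trace, and the principal-cosine form comes from identifying the singular values of $(U_g^+)^\top U_g^-$ with the $\cos\phi_j$. The differences are minor: you justify that identification through the Bj\"orck--Golub variational characterization, where the paper attributes it to the Jordan--Wielandt theorem, and you derive the range $\mathcal{O}_g\in[0,1]$ explicitly from $\|M\|_2\le\|U_g^+\|_2\|U_g^-\|_2=1$, which the paper only asserts.
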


\begin{proof}
By the Jordan--Wielandt theorem, the singular values of $(U_g^+)^\top U_g^-$ are exactly the principal cosines $\sigma_j = \cos \phi_j \in [0, 1]$ for $j \in \{1, \dots, k\}$. Thus:
\begin{equation}
\|(U_g^+)^\top U_g^-\|_F^2 = \sum_{j=1}^k \sigma_j^2 = \sum_{j=1}^k \cos^2 \phi_j.
\end{equation}
Using the cyclic commutativity of the trace:
\begin{equation}
\operatorname{tr}(P_g^+ P_g^-) = \operatorname{tr}\bigl(U_g^+ (U_g^+)^\top U_g^- (U_g^-)^\top\bigr) = \operatorname{tr}\bigl((U_g^+)^\top U_g^- (U_g^-)^\top U_g^+\bigr) = \|(U_g^+)^\top U_g^-\|_F^2.
\end{equation}
Dividing by $k$ yields the claim.
\end{proof}

\begin{proposition}[Expected Overlap Under Isotropic Null Calibration]
\label{prop:null-subspace-baseline}
Let $U_g^+$ and $U_g^-$ be independent, uniformly distributed random subspaces on the Grassmannian $\mathbf{Gr}(r_g^\pm, D)$. Then:
\begin{equation}
\mathbb{E}\bigl[\mathcal{O}_g^2\bigr] = \frac{\max(r_g^+, r_g^-)}{D}.
\label{eq:app-random-subspace-baseline}
\end{equation}
\end{proposition}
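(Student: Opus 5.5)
The plan is to compute $\mathbb{E}[\operatorname{tr}(P_g^+P_g^-)]$ directly, using the trace identity from Proposition~\ref{prop:overlap-derivation}, and then divide by $k=\min(r_g^+,r_g^-)$. By that proposition,
\[
\mathcal{O}_g^2=\frac{\operatorname{tr}(P_g^+P_g^-)}{k},
\qquad
P_g^\pm=U_g^\pm(U_g^\pm)^\top .
\]
The argument therefore reduces to finding the expected trace of a product of two independent random orthogonal projectors.

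\emph{Step 1: the mean projector.} Uniformity on $\mathbf{Gr}(r,D)$ means the law of $P=UU^\top$ is invariant under conjugation $P\mapsto QPQ^\top$ for every orthogonal $Q$. Hence $M=\mathbb{E}[P]$ satisfies $QMQ^\top=M$ for all orthogonal $Q$. A symmetric matrix commuting with every orthogonal matrix is a scalar multiple of the identity; one way to see this is to take $Q$ to be coordinate reflections and permutations. So $M=cI_D$. Taking traces gives $cD=\mathbb{E}[\operatorname{tr}P]=r$, so $\mathbb{E}[P_g^\pm]=(r_g^\pm/D)\,I_D$.

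\emph{Step 2: independence.} Since $P_g^+$ and $P_g^-$ are independent and the trace is linear, with all entries bounded so Fubini applies,
\[
\mathbb{E}\bigl[\operatorname{tr}(P_g^+P_g^-)\bigr]=\operatorname{tr}\bigl(\mathbb{E}[P_g^+]\,\mathbb{E}[P_g^-]\bigr)=\frac{r_g^+r_g^-}{D^2}\operatorname{tr}(I_D)=\frac{r_g^+r_g^-}{D}.
\]

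\emph{Step 3: normalization.} Dividing by $k=\min(r_g^+,r_g^-)$ and using $r_g^+r_g^-=\min\cdot\max$ gives
\[
\mathbb{E}[\mathcal{O}_g^2]=\frac{\max(r_g^+,r_g^-)}{D},
\]
which is the claim.

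The only step needing real justification is Step 1, namely that rotation invariance forces $\mathbb{E}[P]$ to be isotropic. I would argue it concretely. Diagonal sign-flip matrices $Q$ kill the off-diagonal entries of $M$, and permutation matrices force the diagonal entries to be equal. The rest is bookkeeping.

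The ambient dimension $D$ here should be read as the dimension of the channel group, $|C|$ or $|R|$ restricted to a layer. The computation is the same either way.
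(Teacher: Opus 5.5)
Your proof is correct and is essentially the paper's argument: Haar invariance gives $\mathbb{E}[P_g^\pm]=(r_g^\pm/D)I_D$, independence yields $\mathbb{E}[\operatorname{tr}(P_g^+P_g^-)]=r_g^+r_g^-/D$, and dividing by $\min(r_g^+,r_g^-)$ gives $\max(r_g^+,r_g^-)/D$. The differences are cosmetic: the paper conditions on $P_g^+$ and uses only the isotropy of $\mathbb{E}[P_g^-]$, whereas you factor both expectations, and you spell out the sign-flip and permutation argument that the paper leaves implicit in ``rotational invariance.''
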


\begin{proof}
By rotational invariance of the Haar measure, $\mathbb{E}[P_g^-] = \frac{r_g^-}{D} I_D$. Conditioning on $P_g^+$ and using independence:
\begin{equation}
\mathbb{E}\bigl[\operatorname{tr}(P_g^+ P_g^-) \mid P_g^+\bigr] = \operatorname{tr}\left(P_g^+ \frac{r_g^-}{D} I_D\right) = \frac{r_g^-}{D} \operatorname{tr}(P_g^+) = \frac{r_g^+ r_g^-}{D}.
\end{equation}
Dividing by $k = \min(r_g^+, r_g^-)$ yields $\frac{r_g^+ r_g^-}{D \min(r_g^+, r_g^-)} = \frac{\max(r_g^+, r_g^-)}{D}$.
\end{proof}

\begin{remark}[Null Calibration Context]
Proposition~\ref{prop:null-subspace-baseline} demonstrates that when the ambient channel dimension is large ($D \gg r$), independent random subspaces yield an expected overlap near zero. The empirical overlap observed in Figure~1(a) ($\mathcal{O}_C \approx 1.0$) thus reflects mathematically profound shared structure between correct and wrong rollouts.
\end{remark}

\begin{proposition}[Projector Distance Equals Square Root of Symmetric Difference]
\label{prop:projector-symmetric-difference}
Let $C^{(t)}, C^{(\mathrm{pre})} \subseteq \{1, \dots, d\}$ be channel sets with diagonal projectors $P_C^{(t)} = \operatorname{diag}(\mathbf{1}_{C^{(t)}})$. The Frobenius distance between projectors satisfies:
\begin{equation}
\|P_C^{(t)} - P_C^{(\mathrm{pre})}\|_F = \sqrt{|C^{(t)} \mathbin{\triangle} C^{(\mathrm{pre})}|},
\label{eq:app-channel-symmetric-difference}
\end{equation}
where $\mathbin{\triangle}$ denotes symmetric set difference.
\end{proposition}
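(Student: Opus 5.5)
The argument is a direct entrywise computation, since both projectors are diagonal with entries in $\{0,1\}$. I write $\mathbf{1}_{C}(k)$ for the indicator of channel $k$ belonging to $C$. The difference $P_C^{(t)}-P_C^{(\mathrm{pre})}$ is again diagonal, with $k$-th entry $\mathbf{1}_{C^{(t)}}(k)-\mathbf{1}_{C^{(\mathrm{pre})}}(k)$, and all off-diagonal entries are zero. By the definition of the Frobenius norm in Equation~\eqref{eq:frobenius-norm}, only the diagonal contributes:
\[
\|P_C^{(t)}-P_C^{(\mathrm{pre})}\|_F^2=\sum_{k=1}^{d}\bigl(\mathbf{1}_{C^{(t)}}(k)-\mathbf{1}_{C^{(\mathrm{pre})}}(k)\bigr)^2 .
\]

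Next I split into cases on channel $k$. If $k$ lies in both sets or in neither, the difference of indicators is $0$. If $k$ lies in exactly one of the two sets, the difference is $\pm1$, so its square is $1$. Therefore each summand equals $\mathbf{1}_{C^{(t)}\triangle C^{(\mathrm{pre})}}(k)$. Summing over $k$ gives $|C^{(t)}\triangle C^{(\mathrm{pre})}|$, and taking the nonnegative square root yields Equation~\eqref{eq:app-channel-symmetric-difference}.

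There is no genuine obstacle here. The only point worth stating explicitly is that the squared indicator difference equals the indicator of the symmetric difference, which is the identity $(a-b)^2=a+b-2ab=a\oplus b$ for $a,b\in\{0,1\}$. If the sets are pooled across layers, as in the definition of $C$ and $R$, the same argument applies verbatim to the block-diagonal projector on $\mathbb{R}^p$, with the sum running over all layer-channel pairs.
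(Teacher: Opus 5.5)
Your proof is correct and follows the paper's argument: you both observe that $P_C^{(t)}-P_C^{(\mathrm{pre})}$ is diagonal with entries $\pm1$ exactly on $C^{(t)}\mathbin{\triangle}C^{(\mathrm{pre})}$ and $0$ elsewhere, so the squared Frobenius norm counts the symmetric difference. Your remark about the pooled block-diagonal projector on $\mathbb{R}^p$ is a harmless extension of the same computation.
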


\begin{proof}
$P_C^{(t)} - P_C^{(\mathrm{pre})}$ is a diagonal matrix whose $j$-th entry is non-zero if and only if index $j$ belongs to one set but not the other:
\begin{equation}
(P_C^{(t)} - P_C^{(\mathrm{pre})})_{jj} = 
\begin{cases}
+1, & j \in C^{(t)} \setminus C^{(\mathrm{pre})}, \\
-1, & j \in C^{(\mathrm{pre})} \setminus C^{(t)}, \\
0, & \text{otherwise}.
\end{cases}
\end{equation}
Squaring the diagonal entries and summing across all $j \in \{1, \dots, d\}$ yields $|C^{(t)} \mathbin{\triangle} C^{(\mathrm{pre})}|$. Taking the square root proves Equation~(3) of the main text.
\end{proof}

\begin{proposition}[Exact Energy Decomposition of Baseline-Relative Drift]
\label{prop:relative-drift-decomposition}
Let the channels $\{1, \dots, d\}$ be partitioned into orthogonal coherence channels $C$ and reasoning channels $R$, with projectors $P_C^{(0)}$ and $P_R^{(0)} = I - P_C^{(0)}$. For any activation displacement $\Delta H = H^{(1)} - H^{(0)}$, define raw drift $F_g$, normalized percentage drift $A_g$, and baseline energy weight $w_g$:
\begin{equation}
F_g = \|\Delta H P_g^{(0)}\|_F, \qquad A_g = 100 \times \frac{F_g}{\|H^{(0)} P_g^{(0)}\|_F}, \qquad w_g = \frac{\|H^{(0)} P_g^{(0)}\|_F^2}{\|H^{(0)}\|_F^2}, \quad g \in \{C, R\}.
\end{equation}
Then the total normalized activation drift satisfies:
\begin{equation}
\frac{\|\Delta H\|_F^2}{\|H^{(0)}\|_F^2} = w_C \left(\frac{A_C}{100}\right)^2 + w_R \left(\frac{A_R}{100}\right)^2.
\label{eq:app-relative-drift-decomposition}
\end{equation}
\end{proposition}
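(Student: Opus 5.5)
The identity follows from the orthogonal splitting of the squared Frobenius norm across the two channel groups; after that it is only algebra.

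First, since $P_C^{(0)}$ and $P_R^{(0)}=I-P_C^{(0)}$ are complementary diagonal $0/1$ matrices, any matrix $M$ with channels as columns satisfies
\[
\|M\|_F^2=\|MP_C^{(0)}\|_F^2+\|MP_R^{(0)}\|_F^2 .
\]
This holds because right-multiplication by $P_g^{(0)}$ zeroes out the columns outside $g$, and the squared Frobenius norm is the sum of squared column norms. Equivalently, expand $\|MP_C^{(0)}+MP_R^{(0)}\|_F^2$. The cross term is
\[
\operatorname{tr}\bigl(P_C^{(0)}M^\top M P_R^{(0)}\bigr)=\operatorname{tr}\bigl(M^\top M P_R^{(0)}P_C^{(0)}\bigr),
\]
by cyclicity of the trace, and this vanishes because $P_R^{(0)}P_C^{(0)}=0$.

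Second, apply this splitting with $M=\Delta H$. Together with the definition of $F_g$ it gives
\[
\|\Delta H\|_F^2=F_C^2+F_R^2 .
\]
Dividing by $\|H^{(0)}\|_F^2$, which must be assumed nonzero, gives
\[
\frac{\|\Delta H\|_F^2}{\|H^{(0)}\|_F^2}=\sum_{g\in\{C,R\}}\frac{F_g^2}{\|H^{(0)}\|_F^2}.
\]

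Third, rewrite each summand. Multiply and divide by $\|H^{(0)}P_g^{(0)}\|_F^2$:
\[
\frac{F_g^2}{\|H^{(0)}\|_F^2}=\frac{\|H^{(0)}P_g^{(0)}\|_F^2}{\|H^{(0)}\|_F^2}\cdot\frac{F_g^2}{\|H^{(0)}P_g^{(0)}\|_F^2}=w_g\left(\frac{A_g}{100}\right)^2 .
\]
Summing over $g\in\{C,R\}$ gives Equation~\eqref{eq:app-relative-drift-decomposition}.

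The only real obstacle is a degenerate case. $A_g$ is undefined when $\|H^{(0)}P_g^{(0)}\|_F=0$, so I would state the nonvanishing of both group baseline norms as a standing assumption, implicit in the definition of $A_g$. Note that if a group's baseline activations vanish, its term $F_g^2$ can still be nonzero, so the decomposition cannot be rescued by a $0\cdot\infty$ convention. The same splitting applied to $M=H^{(0)}$ gives $w_C+w_R=1$. This exhibits the total normalized drift as a $w$-weighted average of the squared group drifts $(A_g/100)^2$, which is worth remarking after the proof.
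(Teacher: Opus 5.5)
Your proof is correct and follows the paper's own argument. Both split the squared Frobenius norm additively over the complementary coordinate projectors to get $\|\Delta H\|_F^2 = F_C^2 + F_R^2$, divide by $\|H^{(0)}\|_F^2$, and then multiply and divide each term by $\|H^{(0)}P_g^{(0)}\|_F^2$; your trace-cyclicity justification of the splitting and your explicit nonvanishing-norm assumption are details the paper leaves implicit.
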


\begin{proof}
Because $P_C^{(0)}$ and $P_R^{(0)}$ project onto orthogonal coordinate subsets, the squared Frobenius norms decompose additively:
\begin{align}
\|\Delta H\|_F^2 &= \|\Delta H P_C^{(0)}\|_F^2 + \|\Delta H P_R^{(0)}\|_F^2 = F_C^2 + F_R^2, \\
\|H^{(0)}\|_F^2 &= \|H^{(0)} P_C^{(0)}\|_F^2 + \|H^{(0)} P_R^{(0)}\|_F^2.
\end{align}
Dividing both sides of the displacement sum by $\|H^{(0)}\|_F^2$:
\begin{equation}
\frac{\|\Delta H\|_F^2}{\|H^{(0)}\|_F^2} 
= \frac{\|H^{(0)} P_C^{(0)}\|_F^2}{\|H^{(0)}\|_F^2} \left(\frac{F_C}{\|H^{(0)} P_C^{(0)}\|_F}\right)^2 
+ \frac{\|H^{(0)} P_R^{(0)}\|_F^2}{\|H^{(0)}\|_F^2} \left(\frac{F_R}{\|H^{(0)} P_R^{(0)}\|_F}\right)^2.
\end{equation}
Substituting $w_g$ and $A_g / 100$ establishes the decomposition, confirming the validity of Equation~(4).
\end{proof}

\end{document}